\documentclass{article} 
\usepackage{iclr2025_conference,times}


\usepackage{amsmath,amsfonts,bm}









\def\eqref#1{(\ref{#1})}









\def\1{\bm{1}}










\DeclareMathAlphabet{\mathsfit}{\encodingdefault}{\sfdefault}{m}{sl}
\SetMathAlphabet{\mathsfit}{bold}{\encodingdefault}{\sfdefault}{bx}{n}













\usepackage{hyperref}
\usepackage{url}

\usepackage{booktabs}       
\usepackage{amsfonts}       
\usepackage{nicefrac}       
\usepackage{microtype}      
\usepackage{xcolor}         

\usepackage{amsmath}
\usepackage{amssymb}
\usepackage{mathtools}
\usepackage{amsthm}
\allowdisplaybreaks[4]

\usepackage[capitalize,noabbrev]{cleveref}
\theoremstyle{plain}
\newtheorem{theorem}{Theorem}[section]

\newtheorem{lemma}[theorem]{Lemma}
\newtheorem{corollary}[theorem]{Corollary}

\newtheorem{assumption}[theorem]{Assumption}
\theoremstyle{remark}
\newtheorem{remark}[theorem]{Remark}
\newcommand{\xlmroberta}{XLM-Roberta-Base}
\newcommand{\llama}{LLaMA-2-7B}
\usepackage[font={small}]{caption}
\usepackage{algorithm,algorithmic}
\usepackage{multirow}
\usepackage{enumitem}
\usepackage{subcaption}

\title{Exploring Selective Layer Fine-Tuning in Federated Learning}


\author{%
  Yuchang Sun$^1$\thanks{Work done as an intern at Alibaba Group.}\,, Yuexiang Xie$^2$, Bolin Ding$^2$, Yaliang Li$^2$, Jun Zhang$^1$ \\
  $^1$Hong Kong University of Science and Technology 
  $^2$Alibaba Group \\
}

%

\iclrfinalcopy 
\begin{document}

\maketitle

\begin{abstract}
  Federated learning (FL) has emerged as a promising paradigm for fine-tuning foundation models using distributed data in a privacy-preserving manner. Under limited computational resources, clients often find it more practical to fine-tune a selected subset of layers, rather than the entire model, based on their task-specific data. In this study, we provide a thorough theoretical exploration of selective layer fine-tuning in FL, emphasizing a flexible approach that allows the clients to adjust their selected layers according to their local data and resources. We theoretically demonstrate that the layer selection strategy has a significant impact on model convergence in two critical aspects: the importance of selected layers and the heterogeneous choices across clients. Drawing from these insights, we further propose a strategic layer selection method that utilizes local gradients and regulates layer selections across clients. Extensive experiments on both image and text datasets demonstrate the effectiveness of the proposed strategy compared with several baselines, highlighting its advances in identifying critical layers that adapt to the client heterogeneity and training dynamics in FL. 
\end{abstract}

\section{Introduction}

Foundation models~\citep{foundation_model}, including BERT~\citep{BERT}, GPT~\citep{gpt2, gpt3}, CLIP~\citep{clip,ViT}, LLaMA~\citep{llama}, and so on~\citep{dall-e,PaLM}, have attracted considerable attention due to their exceptional ability in handling complex tasks~\citep{gpt_early_look}.
When it comes to practical deployments of these models in specialized fields, fine-tuning with domain-specific data becomes critical.
Nevertheless, the distributed nature of data across various users and organizations presents a challenge for centralized storage and training, as it may lead to severe privacy concerns and incur additional transmission costs.
Such issues have positioned federated learning (FL)~\citep{fedavg} as a promising paradigm to fine-tune foundation models, aligning model enhancement with privacy preservation~\citep{fl_llm,fl_llm_2}.

Generally, FL aims to learn a global model through a collaborative process where clients perform local training and upload the parameter updates to a central server for aggregation.
Given that clients have limited resources~\citep{bonawitz2019towards,IoTsurvey}, such as computational power, communication bandwidth, and available memory, it becomes impractical for them to fine-tune the entire foundation model. 
Two kinds of solutions have recently emerged to tackle this challenge. The first solution employs \emph{parameter-efficient fine-tuning} techniques~\citep{adapter1,adapter2,lora,prefix}, which introduces additional modules integrated into foundation models and updates these modules with domain-specific data while keeping the parameters of the foundation model frozen. The second one is \emph{selective model fine-tuning}~\citep{lee2019mixout,child_tuning,zhang2022fine,p-transfer}, which only selects an impactful subset of parameters for optimization to streamline the fine-tuning process under resource constraints.

This study focuses on selective model fine-tuning as it is particularly well-suited to address the inherent heterogeneity in FL, i.e., the data heterogeneity and device heterogeneity~\citep{hetero1,hetero2,silos}.
Specifically, clients involved in FL have non-independent and identically distributed (non-IID) data and different system resources, leading to the need to customize fine-tuning strategies to such discrepancies.
For example, clients with limited computation resources may opt to update only a fraction of the model, while those with sufficient resources and high-quality data prefer fine-tuning a large portion of the model to enhance performance.
Selective model fine-tuning enables clients to adjust the chosen part of the model to be updated based on their own capabilities, providing a flexible and advanced solution to mitigating sub-optimal issues induced by the heterogeneity in FL.

The exploration of selective model fine-tuning within the context of FL, is still in its early stage.
Previous studies~\citep{p-transfer,child_tuning,lee2022surgical,dun2022resist} have concentrated on designing static strategies for subnetwork selection to improve model fine-tuning performance, without adequately considering heterogeneity among clients.
To fulfill this gap, in this paper, we provide a comprehensive theoretical analysis on selective model fine-tuning in FL, focusing on a general scenario where clients are allowed to choose different layers for local training and vary their choices across different training epochs, called {\it selective layer fine-tuning}.
Specifically, we formulate the optimization objective of selective layer fine-tuning in FL, and provide insights on effectively determining critical layers to achieve model convergence. Building on these insights, we further propose a novel layer selection strategy that leverages local gradients and the regulation of unified selections.  \looseness=-1

Our main contributions are summarized as follows:
\begin{itemize}
    \item We study a practical FL setup where clients choose to fine-tune some layers of the model, with arbitrary layer selection that may vary among clients and across different training epochs. 
    We theoretically analyze such a training scheme and investigate the impact of layer selection. 
    The analytical results show that the selected layers affect the convergence performance with two critical aspects, namely the importance of layers and heterogeneous choices across clients.
    \item Building on the theoretical analysis, we formulate the optimization problem of selective layer fine-tuning considering the limited and diverse resource budgets of clients. 
    Inspired by the solution to this optimization problem, we propose an effective strategy for selecting layers for fine-tuning that are well-suited for the local data and available resources at clients.
    \item We conduct experiments to compare the proposed layer selection strategy with baseline methods on both image and text datasets. 
    Experimental results demonstrate the superior performance of the proposed strategy in achieving better model performance, highlighting that the proposed strategy can find critical layers for fine-tuning while considering the client heterogeneity and training dynamics in FL\footnote{The source codes are available at \url{https://github.com/hiyuchang/fed_sel_tune}.}.
\end{itemize}

\section{Related Works}\label{sec:related}

Various approaches have been proposed to properly select a subset of parameters for fine-tuning foundation models within centralized training, including optimizing a non-structured mask matrix~\citep{lee2019mixout,child_tuning,zhang2022fine,p-transfer,bitfit,crash,kovaleva2019revealing,elsa} and adopting layer-wise selection strategies~\citep{kovaleva2019revealing,elsa,lee2022surgical,kaplun2023subtuning}. For example, \cite{lee2019mixout} suggest updating the parameters in a stochastic manner based on the Bernoulli distribution, while \cite{kovaleva2019revealing,elsa} showcase that fine-tuning the top few layers achieves competitive model performance in downstream tasks.
Moreover, \cite{lee2022surgical} propose to select layers according to their gradient statistics.

Recent studies have extended the selective fine-tuning techniques to FL scenarios~\citep{nguyen2022begin,chen2022fedtune,hilmkil2021scaling,zhang2022finetuning}. Specifically, researchers~\citep{lee2023layer,dun2022resist} investigate layer-wise network decomposition to achieve selective model fine-tuning on clients. 
However, these works fail to offer methodologies for adaptive and dynamic layer selection that takes into account the heterogeneous characteristics of clients.
In addition, personalized FL algorithms~\citep{pillutla2022federated,pFedGate} propose to train different subnetworks on clients towards better local models.
Different from previous studies, we focus on providing an in-depth understanding of selective layer fine-tuning in FL, considering the heterogeneity from the perspective of client resources and local data distributions.

\section{Problem Formulation}\label{sec:model}
\paragraph{Federated learning}
We consider an FL system with a central server and $N$ clients (denoted by the set $\mathcal{N}=\{1,\dots,N\}$), where each client has a private dataset $\mathcal{D}_i$ consisting of $d_i=|\mathcal{D}_i|$ data instances.
The server owns a pretrained foundation model $\theta\in\mathbb{R}^{P}$, containing $P$ trainable parameters and $L$ layers with the index set $\mathcal{L}=\{1,2,\dots,L\}$.
The server aims to fine-tune this foundation model based on the clients' datasets $\mathcal{D} = \{\mathcal{D}_i\}_{i\in\mathcal{N}}$ but does not directly access these datasets. The learning goal is formally given as: 
\begin{equation}
\vspace{-5pt}
    \min_{\theta\in\mathbb{R}^{P}} f(\theta)
    = \sum_{i=1}^{N} \alpha_i f_i(\theta),
    \label{eq:training_goal}
\end{equation}
where $\alpha_i =\frac{d_i}{\sum_{j=1}^{N} d_j}$ denotes the relative sample size, and $f_i(\theta)= \frac{1}{d_i} \sum_{\xi\in \mathcal{D}_i} F_i(\theta; \xi)$ denotes the local training objective of client $i$.
Here we use $F_i(\theta; \xi)$ to define the (possibly non-convex) loss function computed by the model $\theta$ on data sample $\xi$.
The training process of FL is divided into $T$ training epochs.
In each epoch $t\in[T]$, the server chooses a subset of clients $\mathcal{S}^t$, and sends the up-to-date global model $\theta^t$ to these clients for local training.

\begin{figure}[!t]
\begin{center}
    \centerline{\includegraphics[width=0.76\columnwidth]{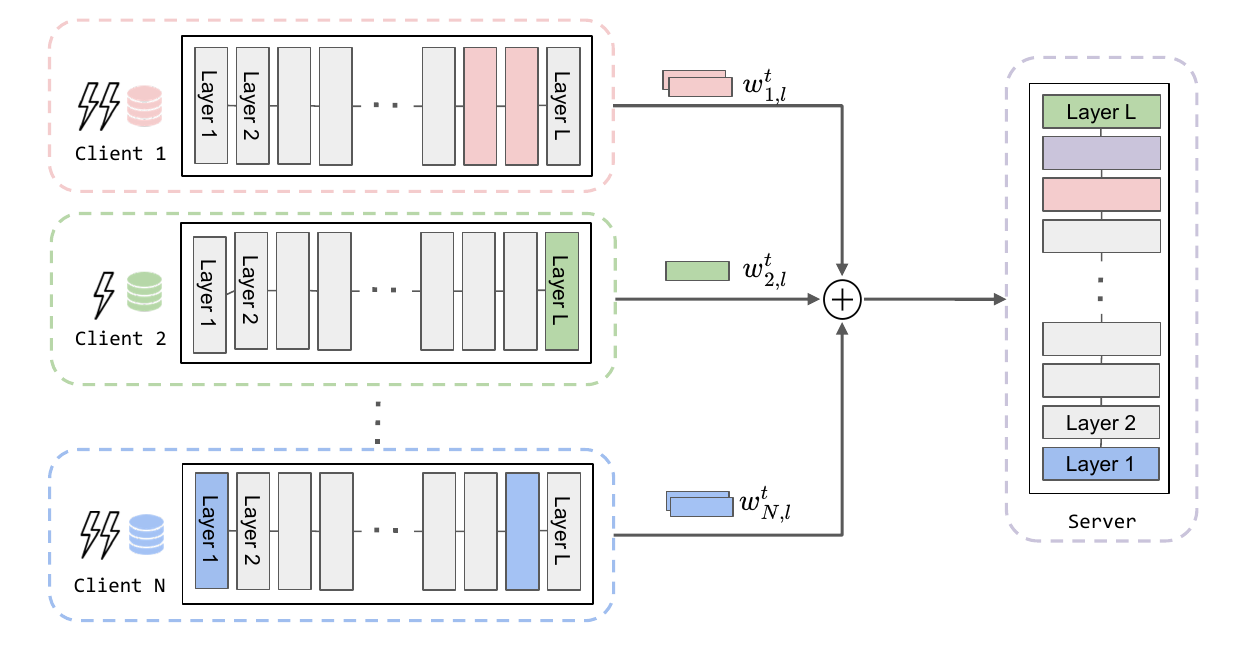}}
    \vspace{-0.1in}
    \caption{An overview of selective layer fine-tuning in FL. The colored layers are selected for fine-tuning.}
    \label{fig:overview}
\end{center}
\end{figure}

\paragraph{Selective layer fine-tuning in FL}
An overview of selective layer fine-tuning in FL is illustrated in \cref{fig:overview}.
Due to resource limitations, clients tend to update some of the layers in the local training process rather than the entire global model.
Formally, we define a masking vector $\mathbf{m}_i^t \in \{0,1\}^{L}$ for each client $i$. The $l$-th element $\mathbf{m}_i^t(l)$ equals $1$ if the $l$-th layer is selected to be updated in the $t$-th training epoch, and $\mathbf{m}_i^t(l)=0$ otherwise.
Accordingly, the selected layer set of client $i$ is denoted by $\mathcal{L}_i^t \triangleq \{l\in\mathcal{L}|\mathbf{m}_i^t(l)=1 \}$, and the set for all selected layers in the $t$-th training epoch is denoted by $\mathcal{L}_t = \bigcup_{i\in\mathcal{S}^t} \mathcal{L}_i^t$. 
The choice of selected layer sets has a substantial effect on training performance, which will be discussed in detail later.

After determining the selected layer set $\mathcal{L}_i^t$, clients initialize the local model according to the global model sent by the server, i.e., $\theta_i^{t,0} = \theta^{t}$, and train the local model for $\tau$ local steps using the mini-batch SGD algorithm~\citep{fedavg,fednova,scaffold}.
For local step $k\in[\tau]$, client $i$ samples a batch of data instances $\xi_{i}^{t,k}$, and calculates the gradients for the selected layers, which is given as\footnote{$\nabla_l F(\theta)$ represents the gradient of a function $F(\theta)$ w.r.t. the parameters of the $l$-th layer in model $\theta$.}: 
\begin{equation}
    \sum_{l\in \mathcal{L}_i^t} g_{i,l}(\theta_i^{t,k}; \xi_{i}^{t,k}) = \sum_{l\in \mathcal{L}_i^t} \nabla_l F_i(\theta_i^{t,k}; \xi_{i}^{t,k}).
\end{equation}
Notably, the local gradient calculation pertains solely to the layers within the subset $\mathcal{L}_i^t$.
Afterward, the local model is updated with the learning rate $\eta$:
\begin{equation}
    \theta_i^{t,k} = \theta_i^{t,k-1} - \eta \sum_{l\in \mathcal{L}_i^t} g_{i,l}(\theta_i^{t,k-1}; \xi_{i}^{t,k-1}), \forall k\in \{1,2,\dots,\tau\}.
    \label{eq:local_sgd}
\end{equation}
The accumulated model update in local training is summarized as:
\begin{equation}
    \Delta_i^{t} = \frac{1}{\eta} (\theta_i^{t,0} - \theta_i^{t,\tau}) = \sum_{k=0}^{\tau-1} \sum_{l\in \mathcal{L}_i^t} g_{i,l}(\theta_i^{t,k}; \xi_{i}^{t,k}).
    \label{eq:local_update}
\end{equation}

After local training, clients upload their model updates $\Delta_i^t, i\in\mathcal{S}^t$ to the server.
The server performs federated aggregation among these model updates and optimizes the global model accordingly:
\begin{equation}
    \Delta^t = \sum_{l\in \mathcal{L}_t}\sum_{i\in\mathcal{S}^t} w_{i,l}^{t} \sum_{k=0}^{\tau-1} g_{i,l}(\theta_{i}^{t,k}; \xi_{i}^{t,k}),
    \label{eq:delta_t}
    \vspace{-4pt}
\end{equation}
and
\begin{equation}
    \theta^{t+1} = \theta^{t} - \eta \Delta^t.
    \label{eq:global_model}
\end{equation}
Inspired by previous studies~\citep{fedavg,LiHYWZ20}, the aggregation weights in selective layer fine-tuning are defined based on the data ratio and the masking vectors, which are formally given as:
\begin{equation}
    w_{i,l}^{t} = \left\{
    \begin{array}{ll}
        \frac{d_i}{\sum_{\{j\in\mathcal{S}^t| \mathbf{m}_j^t(l) = 1 \}}d_j}, & \text{if } \mathbf{m}_i^t(l) = 1,  \\
        0, & \text{otherwise.}
    \end{array} \right.
\end{equation}
The details of the training process are summarized in~\cref{alg:framework}.

\begin{algorithm}[t]
   \caption{Selective Layer Fine-tuning in FL}
   \label{alg:framework}
    \begin{algorithmic}
    \STATE {\bfseries Input:} {The pre-trained global model $\theta^0$}
    \FOR{$t=0,1,\dots,T-1$}
        \STATE{Sample a set of clients $\mathcal{S}^t$;}
        \STATE{Broadcast the up-to-date global model $\theta^t$ and selected layer set $\mathcal{L}_i^t$ to clients $\mathcal{S}^t$;}
        \FOR{each client $i$ in $\mathcal{S}^t$}
    	\STATE{Compute the gradients w.r.t. layers $\mathcal{L}_i^t$ and update the model for $\tau$ steps;} \hfill \COMMENT{$\triangleright$~\cref{eq:local_sgd}}
        \STATE{Upload the accumulated updates $\Delta_i^t$ to the server;} \hfill \COMMENT{$\triangleright$~\cref{eq:local_update}}
    	\ENDFOR
    	\STATE{Compute the global update $\Delta^t$;} \hfill \COMMENT{$\triangleright$~\cref{eq:delta_t}}
        \STATE{Update the global model $\theta^t$;} \hfill \COMMENT{$\triangleright$~\cref{eq:global_model}}
    \ENDFOR
    \STATE {\bfseries Return:} The global model $\theta^T$
    \end{algorithmic}
\end{algorithm}

\section{Which Layers Should Be Selected For Fine-Tuning?}\label{sec:main_method}
The aforementioned training process provides substantial flexibility in selective layer fine-tuning, namely, clients are allowed to select different layers for local training and adjust their choices in different training epochs.
Such flexibility enables clients to tailor their local training to their data and resources, providing feasible solutions for handling the heterogeneity in FL.

However, without a well-designed strategy for layer selection, the optimization of the global model in FL could be severely hindered, potentially leading to a suboptimal solution or even failure in convergence. As a result, researchers have proposed several useful strategies for layer selection in recent years, including:
\begin{itemize}
    \item All clients select the same layer set for fine-tuning~\citep{pillutla2022federated,lee2019mixout,zhang2022fine,crash,elsa}, i.e., $\mathcal{L}_i^t =\mathcal{L}_j^t, \forall i\neq j$; 
    \item Clients fix their selections across different training epochs~\citep{personal_layer,pFedGate}, i.e., $\mathcal{L}_i^{t_1} =\mathcal{L}_i^{t_2}, \forall t_1,t_2\in [T]$.
\end{itemize}

These strategies for layer selection are proposed based on the insights drawn from experts' experience, serving as special instantiations of the selected layer sets $\mathcal{L}_i^t$. 
It is worth noting that these experience-driven strategies might not consistently yield optimal results in various FL applications, particularly considering client heterogeneity.
This leads to an essential question: \textit{How to effectively determine the task-specified layer selection strategy among a large search space of possible options?} 
In the rest of this section, we provide a theoretical analysis to answer this question.

\subsection{Theoretical Analysis}\label{sec:convergence}
Following previous theoretical analysis in FL~\citep{fednova,scaffold,LiHYWZ20}, we begin with some necessary assumptions.

\begin{assumption}\label{ass-smooth}
($\gamma$-Smoothness)
There exists a constant $\gamma>0$ such that for any $\theta, \theta^{\prime} \in \mathbb{R}^{P}$, $\| \nabla f_i(\theta) - \nabla f_i(\theta^{\prime}) \|_2 \leq \gamma \| \theta - \theta^{\prime} \|_2, \forall i\in\mathcal{N}$.
\end{assumption}

For analyzing the effect of each layer on the model convergence, we give several assumptions for the gradient with respect to each layer $l$.

\begin{assumption}\label{ass-gradient}
(Unbiased and variance-bounded stochastic gradient)
The stochastic gradient $ g_{i,l}(\theta^t; \xi_i^t)$ on a randomly sampled batch of data $\xi_i^t$ is an unbiased estimate of the full-batch gradient, i.e., $\mathbb{E} [ g_{i,l}(\theta^t; \xi_i^t)] = \nabla_l f_i(\theta^t)$.
Besides, there exist constants $\sigma_l > 0, \forall l \in \mathcal{L}$ such that $\| g_{i,l}(\theta^t; \xi_i^t) - \nabla_l f_i(\theta^t) \|^2 \leq \sigma_l^2, \forall i\in\mathcal{N}$ and $\sum_{l\in \mathcal{L}_t} \sigma_l^2 \leq \sigma^2$.
\end{assumption}

The non-IID data owned by clients causes diverse gradients.
In the following assumption, we state the diversity of each layer's gradient.

\begin{assumption}\label{ass-diversity}
(Gradient diversity) 
There exist constants $\kappa_l >0, \forall l \in \mathcal{L}$ such that $\left\| \nabla_l f(\theta^{t}) - \nabla_l f_i(\theta^{t}) \right\|^2 \leq \kappa_{l}^2, \forall i\in\mathcal{N}$.
\end{assumption}

Here we first consider a case where $\tau \!=\! 1$ to simplify the analysis without affecting the insights on layer selection. The detailed analysis for the generalized case, i.e., $\tau \!>\! 1$, is provided in \cref{appendix:multi-step}.

Compared with the theoretical analysis for the standard FL settings~\citep{wang2021field,LiHYWZ20}, there exist three additional challenges in selective layer fine-tuning.
Firstly, since each client only updates some layers during the local training process, the aggregated gradient is no longer an unbiased estimate of the local gradient $\nabla f_i(\theta^t)$, i.e.,
\begin{equation}
    \mathbb{E}[\Delta_i^t] = \sum_{l\in\mathcal{L}_i^t} \nabla_l f_i(\theta^t) \neq \nabla f_i(\theta^t),
\end{equation}
where the inequality holds unless all layers are selected for fine-tuning, i.e., $\mathcal{L}_i^t = \mathcal{L}$.
Secondly, since a certain layer may not be selected by all the clients, the aggregated gradient of this layer is not equivalent to the gradient computed based on the global loss function ($\sum_{l\in \mathcal{L}_t} \nabla_l f(\theta^t)$), which is given as:\looseness=-1
\begin{align}
    \mathbb{E}[\Delta^t] = \sum_{l\in \mathcal{L}_t} \sum_{i\in\mathcal{S}^t} w_{i,l}^{t} \nabla_l f_i(\theta^t)
    \neq \sum_{l\in \mathcal{L}_t} \nabla_l f(\theta^t), 
\end{align}
where the inequality holds unless all clients select the same subset of layers.
Last but not least, the aforementioned gaps vary across different training epochs, making it rather complicated in the theoretical analysis.

To link the aggregated and desired gradients, we define a surrogate objective function representing the underlying loss function optimized by the clients, which is given as:
\begin{equation}
    h_l^t(\theta) \triangleq \sum_{i\in\mathcal{S}^t} w_{i,l}^{t} f_i(\theta).
    \label{eq:h_l_t}
\end{equation}
In essence, the layer-wise gradient of this objective function represents the update of the aggregated global update $\Delta^t$. This relationship is elaborated in the following lemma.
\begin{lemma}\label{lemma:unbiased}
    With \cref{ass-gradient}, we have:
    \begin{equation}
        \mathbb{E}[\Delta^t] = \sum_{l\in \mathcal{L}_t} \nabla_l h_l^t(\theta^t),
        \label{eq:lemma-1}
    \end{equation}
    where the expectation is with respect to mini-batch data sampling.
\end{lemma}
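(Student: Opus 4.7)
The plan is a direct computation that is essentially just linearity of expectation combined with \cref{ass-gradient}. Because the lemma is stated in the simplified setting $\tau = 1$ introduced just above, I would begin by specializing the expression for $\Delta^t$ in \eqref{eq:delta_t} to this case. When $\tau=1$ the only local step uses $\theta_i^{t,0} = \theta^t$, which is deterministic given the state of the system at the start of epoch $t$ (i.e.\ conditional on the choices of $\mathcal{S}^t$ and the masks $\mathbf{m}_i^t$). So the starting point is
$$\Delta^t = \sum_{l\in\mathcal{L}_t}\sum_{i\in\mathcal{S}^t} w_{i,l}^t\, g_{i,l}(\theta^t;\xi_i^{t,0}).$$

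Next I would take the expectation over the mini-batch sampling and pull it inside the double sum, which is justified because the aggregation weights $w_{i,l}^t$ and the point $\theta^t$ at which the gradients are evaluated are non-random. Then I would apply the unbiasedness part of \cref{ass-gradient}, replacing $\mathbb{E}[g_{i,l}(\theta^t;\xi_i^{t,0})]$ by $\nabla_l f_i(\theta^t)$, to obtain
$$\mathbb{E}[\Delta^t] = \sum_{l\in\mathcal{L}_t}\sum_{i\in\mathcal{S}^t} w_{i,l}^t\, \nabla_l f_i(\theta^t).$$

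Finally, I would recognize the inner sum as $\nabla_l h_l^t(\theta^t)$ directly from the definition \eqref{eq:h_l_t} of the surrogate objective, using linearity of the layer-wise gradient operator: $\nabla_l h_l^t(\theta^t) = \nabla_l \sum_{i\in\mathcal{S}^t} w_{i,l}^t f_i(\theta^t) = \sum_{i\in\mathcal{S}^t} w_{i,l}^t \nabla_l f_i(\theta^t)$. Substituting this identity back into the displayed expression yields the claimed equation~\eqref{eq:lemma-1}.

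There is no substantive obstacle here; the proof is essentially a bookkeeping exercise. The only point that needs to be checked carefully is the interaction between the layer index set $\mathcal{L}_t = \bigcup_{i\in\mathcal{S}^t}\mathcal{L}_i^t$ and the definition of the weights $w_{i,l}^t$: by construction $w_{i,l}^t = 0$ whenever $\mathbf{m}_i^t(l) = 0$, so the inner sum over $i\in\mathcal{S}^t$ is automatically supported on the set of clients that actually selected layer $l$, and the weights on that support sum to one. Once this masking consistency is recorded, the three steps above give the result immediately, and \cref{ass-gradient} is the only assumption invoked.
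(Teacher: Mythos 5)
Your proof is correct and follows exactly the route the paper intends: unfold $\Delta^t$ via \eqref{eq:delta_t} with $\tau=1$, push the expectation through the sums, apply the unbiasedness part of \cref{ass-gradient} at the deterministic point $\theta^t$, and identify the inner weighted sum with $\nabla_l h_l^t(\theta^t)$ from \eqref{eq:h_l_t}. The paper's own proof is a one-line sketch of the same computation, so your write-up is simply a more explicit version of it, with the useful extra observation about the weights vanishing on unselected layers.
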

\begin{proof}
    We rewrite both sides of \cref{eq:lemma-1} by using the definitions in \cref{eq:delta_t,eq:h_l_t}, and apply \cref{ass-gradient} to obtain the result.
\end{proof}

As aforementioned, the underlying loss function $h_l^t(\theta)$ deviates from the desired global loss function $f(\theta)$, which hinders the optimization of the global model and may lead to suboptimal model performance.
Such deviation can be quantified by the difference between the underlying update $\sum_{l\in \mathcal{L}_t} \nabla_l h_l^t(\theta^t)$ and the global gradient $\nabla f(\theta^{t})$, i.e.,
\begin{equation}
    \mathcal{E}_{t} \triangleq \left\| \nabla f(\theta^{t}) - \sum_{l\in \mathcal{L}_t} \nabla_l h_l^t(\theta^{t}) \right\|^2.
\end{equation}
The term $\mathcal{E}_{t}$ can be further decomposed using the Jensen's inequality into two parts:
\begin{align}
    \mathcal{E}_{t} \leq 2 \left\|\nabla f(\theta^{t}) - \sum_{l\in \mathcal{L}_t} \nabla_l f(\theta^{t}) \right\|_2^2 + 2 \left\|\sum_{l\in \mathcal{L}_t} \nabla_l f(\theta^{t}) - \nabla_l h_l^t(\theta^{t}) \right\|_2^2.
    \label{eq:e_t}
\end{align}

\begin{remark}
    These two terms in the right-hand side (RHS) of \eqref{eq:e_t} can be interpreted as follows: 
    (i) The first term is the difference between the gradient w.r.t. all layers and the gradient w.r.t. the selected layers.
    The value of this term becomes smaller when the selected layers have \emph{larger gradients};
    (ii) The second term represents the mismatch between the desired gradient computed by all clients (i.e., $\nabla_l f(\theta^{t}) = \sum_{i\in\mathcal{N}} \alpha_i \nabla_l f_i(\theta^{t})$) and the underlying update computed by partial clients (i.e., $\nabla_l h_l^t(\theta^{t}) = \sum_{i\in\mathcal{S}_t} w_{i,l}^{t} \nabla_l f_i(\theta^{t})$), resulting from \emph{different layer choices} among clients.
    If some layer is selected by all clients, its corresponding term in this term can be diminished.
\end{remark}

For a better understanding, the following lemma shows an upper bound for the value of $\mathcal{E}_{t}$.
\begin{lemma}\label{lemma:epsilon}
    With \cref{ass-diversity}, we have:
    \begin{equation}
    \vspace{-5pt}
        \mathcal{E}_t 
        \leq 2 \underbrace{\Bigg[\Bigg\| \sum_{l\notin \mathcal{L}_t} \nabla_l f(\theta^{t}) \Bigg\|^2 \Bigg]}_{\mathcal{E}_{t,1}}
        + 2 \underbrace{\sum_{l\in \mathcal{L}_t} \chi_{\mathbf{w}_{t,l}\|\mathbf{\alpha}} \kappa_l^2}_{\mathcal{E}_{t,2}},
        \label{eq:e_t_upper}
    \end{equation}
    where $\chi_{\mathbf{w}_{t,l}\|\mathbf{\alpha}} \triangleq \sum_{i\in\mathcal{N}} \frac{(w_{i,l}^{t}-\alpha_i)^2}{\alpha_i}$.
\end{lemma}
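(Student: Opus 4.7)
The plan is to work from the inequality \eqref{eq:e_t} that already decomposes $\mathcal{E}_t$ into a ``layer-importance'' term and a ``weight-mismatch'' term, and bound each piece separately. The central structural observation is that for a fixed parameter vector $\theta\in\mathbb{R}^P$, the layer-wise gradients $\nabla_l F(\theta)$ live in pairwise \emph{disjoint coordinate blocks} of $\mathbb{R}^P$ (each $\nabla_l F$ is zero outside the coordinates of layer $l$). Consequently, for any collection of functions $F_l$ indexed by layer, we have the identity
\begin{equation*}
    \Big\|\sum_{l\in\mathcal{A}} \nabla_l F_l(\theta)\Big\|^2 = \sum_{l\in\mathcal{A}} \big\|\nabla_l F_l(\theta)\big\|^2,
\end{equation*}
simply because the summands are orthogonal. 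I will apply this identity to both terms on the right-hand side of \eqref{eq:e_t}.

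For the first term, I would write $\nabla f(\theta^t)=\sum_{l\in\mathcal{L}}\nabla_l f(\theta^t)$, so that $\nabla f(\theta^t)-\sum_{l\in\mathcal{L}_t}\nabla_l f(\theta^t)=\sum_{l\notin\mathcal{L}_t}\nabla_l f(\theta^t)$, which matches $\mathcal{E}_{t,1}$ (reading the $\nabla f$ inside the first brace of \eqref{eq:e_t_upper} as $\nabla_l f$). The block-orthogonality above shows this is just $\sum_{l\notin\mathcal{L}_t}\|\nabla_l f(\theta^t)\|^2$, though we do not even need that expansion to match the stated bound.

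For the second term, orthogonality gives
\begin{equation*}
    \Big\|\sum_{l\in\mathcal{L}_t}\bigl(\nabla_l f(\theta^t)-\nabla_l h_l^t(\theta^t)\bigr)\Big\|^2 = \sum_{l\in\mathcal{L}_t}\big\|\nabla_l f(\theta^t)-\nabla_l h_l^t(\theta^t)\big\|^2.
\end{equation*}
Within each layer $l\in\mathcal{L}_t$, I would then expand using $\nabla_l f=\sum_i\alpha_i\nabla_l f_i$ and $\nabla_l h_l^t=\sum_i w_{i,l}^t\nabla_l f_i$, and exploit the zero-sum property $\sum_i(\alpha_i-w_{i,l}^t)=0$ (which holds because $\sum_i\alpha_i=1$ and, by the definition of $w_{i,l}^t$, $\sum_i w_{i,l}^t=1$ whenever $l\in\mathcal{L}_t$) to insert a free $-\nabla_l f(\theta^t)$:
\begin{equation*}
    \nabla_l f(\theta^t)-\nabla_l h_l^t(\theta^t)=\sum_{i\in\mathcal{N}}(\alpha_i-w_{i,l}^t)\bigl(\nabla_l f_i(\theta^t)-\nabla_l f(\theta^t)\bigr).
\end{equation*}
A weighted Cauchy--Schwarz inequality with weights $\alpha_i$ (write each summand as $\tfrac{\alpha_i-w_{i,l}^t}{\sqrt{\alpha_i}}\cdot\sqrt{\alpha_i}\,(\nabla_l f_i-\nabla_l f)$) then produces the $\chi^2$-divergence factor $\chi_{\mathbf{w}_{t,l}\|\boldsymbol{\alpha}}=\sum_i(w_{i,l}^t-\alpha_i)^2/\alpha_i$ multiplied by $\sum_i\alpha_i\|\nabla_l f_i-\nabla_l f\|^2$, and \cref{ass-diversity} bounds the latter by $\kappa_l^2$. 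Summing over $l\in\mathcal{L}_t$ yields exactly $\mathcal{E}_{t,2}$.

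The main subtlety is the zero-sum step that justifies subtracting $\nabla_l f(\theta^t)$ inside the sum, as this is what prevents the Cauchy--Schwarz bound from picking up the much larger quantity $\sum_i\alpha_i\|\nabla_l f_i\|^2$ (which is not controlled by $\kappa_l$) and is what allows the diversity constant $\kappa_l$ rather than a raw gradient-norm constant to appear. Verifying that $\sum_{i\in\mathcal{N}} w_{i,l}^t=1$ for every $l\in\mathcal{L}_t$ (and not just a partial sum over $\mathcal{S}^t$) amounts to unrolling the piecewise definition of $w_{i,l}^t$, after which the remaining calculations are routine. Combining the two bounds and folding the factor of $2$ in \eqref{eq:e_t} completes the argument.
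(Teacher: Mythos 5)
Your proposal is correct and follows essentially the same route as the paper's own proof: the same Jensen decomposition, the identity $\nabla f(\theta^t)-\sum_{l\in\mathcal{L}_t}\nabla_l f(\theta^t)=\sum_{l\notin\mathcal{L}_t}\nabla_l f(\theta^t)$ for the first term, and for the second term the extension of the weight sum from $\mathcal{S}^t$ to $\mathcal{N}$, the insertion of $-\nabla_l f(\theta^t)$ via $\sum_i(w_{i,l}^t-\alpha_i)=0$, weighted Cauchy--Schwarz, and \cref{ass-diversity}. The only difference is that you make explicit two steps the paper leaves implicit (the block-orthogonality of layer-wise gradients that lets the sum over $l$ move outside the norm, and the zero-sum property justifying the centering), and you correctly read the $\nabla f$ inside $\mathcal{E}_{t,1}$ as $\nabla_l f$.
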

The proofs can be found in \cref{appendix:lemmas}.

Next we aim to analyze the impact of layer selection on the convergence of the global model. 
Following previous studies~\citep{bottou2018optimization,fednova}, we consider an algorithm to have achieved convergence if it converges to a stationary point of the global loss function, namely, if its expected squared gradient norm $\min_{t\in[T]} \mathbb{E}\left[ \left\| \nabla f(\theta^t) \right\|_2^2 \right]$ is zero.
The following theorem and corollary show the convergence of the proposed selective layer fine-tuning framework for FL.
\begin{theorem}\label{theorem1}
    Define a constant $C \triangleq 1 - 4\eta L >0$.
    With Assumptions \ref{ass-smooth}-\ref{ass-diversity}, we have:
    \begin{align}
        \min_{t\in[T]} \mathbb{E}\left[ \left\| \nabla f(\theta^t) \right\|_2^2 \right]
        \leq \frac{2}{\eta C T} \left[ f(\theta^{0}) \!-\! f(\theta^{*}) \right] + \frac{2 \gamma \eta}{C} \sigma^2 + \frac{1}{T} \! \sum_{t=1}^{T} \left( \frac{1}{\gamma\eta C} \!+\! 2\right) (\mathcal{E}_{t,1} \!+\! \mathcal{E}_{t,2}),
        \label{eq:convergence}
    \end{align}
    where $\theta^{*} = \arg\min_{\theta\in\mathbb{R}^{P}} f(\theta)$ is the best model with the minimal loss.
\end{theorem}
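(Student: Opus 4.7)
The plan is to follow the standard non-convex FL descent argument, but carefully track the bias introduced by selective layer updates via the surrogate objective $h_l^t(\theta)$ from \cref{lemma:unbiased} and the decomposition in \cref{lemma:epsilon}. Concretely, I would start from the $\gamma$-smoothness of $f$ (\cref{ass-smooth}) applied to the global update $\theta^{t+1}=\theta^{t}-\eta\Delta^{t}$, yielding
\begin{equation*}
    f(\theta^{t+1}) \le f(\theta^{t}) - \eta \langle \nabla f(\theta^{t}),\Delta^{t}\rangle + \tfrac{\gamma\eta^{2}}{2}\|\Delta^{t}\|^{2}.
\end{equation*}
Taking the mini-batch expectation and using \cref{lemma:unbiased} turns the inner-product into $-\eta\langle \nabla f(\theta^{t}), G_{t}\rangle$ with $G_{t}\triangleq\sum_{l\in\mathcal{L}_{t}}\nabla_{l} h_{l}^{t}(\theta^{t})$, and then the identity $-\langle a,b\rangle = \tfrac{1}{2}\|a-b\|^{2}-\tfrac{1}{2}\|a\|^{2}-\tfrac{1}{2}\|b\|^{2}$ isolates $\|\nabla f(\theta^{t})\|^{2}$ at the cost of introducing the selection-bias term $\mathcal{E}_{t}=\|\nabla f(\theta^{t})-G_{t}\|^{2}$.

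Next I would bound the quadratic term. Splitting $\mathbb{E}\|\Delta^{t}\|^{2}=\|G_{t}\|^{2}+\mathrm{Var}(\Delta^{t})$, the variance term reduces under \cref{ass-gradient} and independence of client mini-batches to $\sum_{l\in\mathcal{L}_{t}}\sum_{i\in\mathcal{S}^{t}}(w_{i,l}^{t})^{2}\sigma_{l}^{2}\le\sum_{l\in\mathcal{L}_{t}}\sigma_{l}^{2}\le\sigma^{2}$, since $\sum_{i}w_{i,l}^{t}=1$ implies $\sum_{i}(w_{i,l}^{t})^{2}\le1$. The $\|G_{t}\|^{2}$ piece is handled by $\|G_{t}\|^{2}\le 2\|\nabla f(\theta^{t})\|^{2}+2\mathcal{E}_{t}$, which reintroduces a controllable $\|\nabla f(\theta^{t})\|^{2}$. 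Collecting terms gives a one-step descent inequality of the form
\begin{equation*}
    \tfrac{\eta}{2}\bigl(1-4\gamma\eta\bigr)\,\mathbb{E}\|\nabla f(\theta^{t})\|^{2} \le f(\theta^{t})-\mathbb{E}[f(\theta^{t+1})] + c_{1}\,\mathcal{E}_{t} + c_{2}\,\eta^{2}\gamma\sigma^{2},
\end{equation*}
where $c_{1},c_{2}$ are absolute constants; this is the stage where the condition $C=1-4\eta\gamma>0$ (matching the constant in the statement) enters, so the learning rate must be chosen small enough.

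Finally I would telescope over $t=0,\dots,T-1$, divide by $T\eta C/2$, and upper-bound the average by the minimum to obtain
\begin{equation*}
    \min_{t\in[T]}\mathbb{E}\|\nabla f(\theta^{t})\|^{2} \le \tfrac{2}{\eta C T}[f(\theta^{0})-f(\theta^{*})] + \tfrac{2\gamma\eta}{C}\sigma^{2} + \tfrac{1}{T}\sum_{t=1}^{T}\Bigl(\tfrac{1}{\gamma\eta C}+2\Bigr)\mathcal{E}_{t},
\end{equation*}
and then substitute the bound $\mathcal{E}_{t}\le 2(\mathcal{E}_{t,1}+\mathcal{E}_{t,2})$ from \cref{lemma:epsilon} (absorbing the factor $2$ into the stated coefficient) to recover the claimed inequality.

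I expect the main obstacle to be the bookkeeping in step two: choosing the Young's-inequality constants so that the coefficient of $\|\nabla f(\theta^{t})\|^{2}$ on the left stays of the form $\tfrac{\eta}{2}C$ with $C=1-4\gamma\eta$, while the coefficient multiplying $\mathcal{E}_{t}$ matches exactly $\tfrac{1}{\gamma\eta C}+2$ after telescoping. Bounding the variance of $\Delta^{t}$ in terms of $\sigma^{2}$ is the other delicate point, because the aggregation weights $w_{i,l}^{t}$ depend on which clients select layer $l$; the argument hinges on exploiting $\sum_{i}w_{i,l}^{t}=1$ whenever $l\in\mathcal{L}_{t}$ to obtain a clean layer-wise variance sum.
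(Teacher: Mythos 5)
Your proposal follows essentially the same route as the paper's proof: a one-step smoothness descent bound, replacement of $\mathbb{E}[\Delta^t]$ by the surrogate gradient $\sum_{l\in\mathcal{L}_t}\nabla_l h_l^t(\theta^t)$ via \cref{lemma:unbiased}, isolation of the selection-bias term $\mathcal{E}_t$, a variance bound of $\sigma^2$ on the quadratic term, telescoping, and finally \cref{lemma:epsilon}. The only difference is cosmetic — you extract $\|\nabla f(\theta^t)\|^2$ directly via the polarization identity, whereas the paper applies Young's inequality to the cross term and converts $\|\sum_l\nabla_l h_l^t(\theta^t)\|^2$ back to $\|\nabla f(\theta^t)\|^2$ by a separate Jensen step at the end — and your constant bookkeeping is no further from the stated $C$ than the paper's own proof, which derives $C=1-\gamma\eta$.
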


The proofs can be found in \cref{appendix:single-step}.

\begin{corollary}\label{corollary1}
    With the commonly selected learning rate $\eta=\mathcal{O} \left(\frac{1}{\sqrt{T}}\right)$, the RHS of \eqref{eq:convergence} except the last term becomes zero as $T\rightarrow\infty$.
    Therefore, FL with selective layer fine-tuning may only oscillate around a stationary point of the global loss function with a non-zero error floor $\mathcal{O}(\mathcal{E}_{t,1} + \mathcal{E}_{t,2})$.
\end{corollary}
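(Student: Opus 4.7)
The proof of \cref{corollary1} is a direct order-of-magnitude substitution into the bound of \cref{theorem1}. The plan is to pick $\eta = c/\sqrt{T}$ for a constant $c>0$ small enough that $C = 1 - 4\eta L$ remains strictly positive (in fact $C\to 1$) for all $T$ beyond some threshold, and then examine the asymptotic scaling of each of the three summands on the RHS of \eqref{eq:convergence} separately.

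First I would note that under this step-size choice one has $\eta = \Theta(1/\sqrt{T})$ and $1/C = \Theta(1)$, so the first term becomes $\tfrac{2}{\eta C T}[f(\theta^{0})-f(\theta^{*})] = \Theta(1/\sqrt{T})$, which tends to zero, and the second term $\tfrac{2\gamma\eta}{C}\sigma^{2} = \Theta(1/\sqrt{T})$ also tends to zero. These are precisely the two contributions whose vanishing is asserted in the statement; once they are discarded, the best-iterate squared gradient norm cannot be driven below the remaining residual
\begin{equation*}
    \frac{1}{T}\sum_{t=1}^{T}\left(\frac{1}{\gamma\eta C}+2\right)(\mathcal{E}_{t,1}+\mathcal{E}_{t,2}),
\end{equation*}
which is proportional to the time-averaged layer-selection error $\mathcal{E}_{t,1}+\mathcal{E}_{t,2}$. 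Interpreting this residual as an error floor and combining with \cref{lemma:epsilon} then delivers the corollary and pinpoints \emph{layer importance} and \emph{cross-client selection heterogeneity} as its two sources.

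There is essentially no hard step in the argument; it is a textbook asymptotic evaluation of a convergence bound. The only subtlety worth flagging is that the prefactor $\tfrac{1}{\gamma\eta C}$ in front of the residual actually grows like $\Theta(\sqrt{T})$ under $\eta = \mathcal{O}(1/\sqrt{T})$, so the statement ``error floor $\mathcal{O}(\mathcal{E}_{t,1}+\mathcal{E}_{t,2})$'' should be read qualitatively: the entire non-vanishing portion of the bound is determined by $\mathcal{E}_{t,1}+\mathcal{E}_{t,2}$, and can therefore be reduced only by controlling these two quantities through the layer-selection strategy, which is exactly what the remainder of the paper sets out to do.
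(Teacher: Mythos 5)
Your proposal is correct and matches the paper's treatment: the paper offers no separate proof of \cref{corollary1}, treating it exactly as you do---a direct asymptotic substitution of $\eta=\mathcal{O}(1/\sqrt{T})$ into \eqref{eq:convergence}, under which the first two terms vanish as $\Theta(1/\sqrt{T})$ and only the $\mathcal{E}_{t,1}+\mathcal{E}_{t,2}$ residual survives. Your caveat that the prefactor $\frac{1}{\gamma\eta C}=\Theta(\sqrt{T})$ actually diverges, so that the ``error floor $\mathcal{O}(\mathcal{E}_{t,1}+\mathcal{E}_{t,2})$'' only holds in the qualitative sense that the non-vanishing part of the bound is governed entirely by these two quantities, is a genuine subtlety that the paper itself glosses over, and is worth retaining.
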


According to \cref{theorem1} and \cref{corollary1}, the training performance of the global model is degraded by the increase of $\mathcal{E}_{t,1}+\mathcal{E}_{t,2}$, in the following aspects:
\begin{itemize}
    \item The term $\mathcal{E}_{t,1}$ indicates that it might lead to a suboptimal global model if layers with large gradient norms were not selected for fine-tuning.
    \item The term $\mathcal{E}_{t,2}$ shows that the consistent selections among clients promote the convergence of the global model. Specifically, if the $l$-th layer has large gradient diversity $\kappa_l$, implying significant objective bias among clients, reducing the weight divergence $\chi_{\mathbf{w}_{t,l}\|\mathbf{\alpha}}$ helps to alleviate this term.
\end{itemize}
These findings highlight that the layer selection strategy that minimizes both terms can achieve better convergence of the global model.
However, minimizing these two terms simultaneously may lead to contradictory selection decisions. 
Moreover, the optimal solution for minimizing the sum of $\mathcal{E}_{t,1}+\mathcal{E}_{t,2}$ is inaccessible, since the ground-truth values are intractable in practice.
To resolve these challenges, in the next subsection, we propose a strategy to adaptively select layers for clients and promote the learning performance of the global model.

\subsection{Layer Selection Strategy}\label{sec:optimize}

Based on the above analysis, we need to determine the selected layer sets $\{\mathcal{L}_t^i\}$ that minimize the values of $\mathcal{E}_{t,1}$ and $\mathcal{E}_{t,2}$.
As both terms are hard to compute directly, we first design an approach to estimate their values. 

To minimize $\mathcal{E}_{t,1}$, we prefer selecting layers with larger gradients, which can be achieved by maximizing the value of $\sum_{l\in \mathcal{L}_t} \| \nabla_l f(\theta^{t}) \|_2^2$. Since the norm of the global gradient $\nabla_l f(\theta^{t})$ is unknown, we estimate it by using the sum of stochastic local gradients, expressed as $\sum_{i\in\mathcal{S}^t} \| g_{i,l}(\theta^{t}; \xi_{i}^{t}) \|_2^2$.
Meanwhile, forcing the same layer selection among clients can reduce the value of $\chi_{\mathbf{w}_{t,l}\|\mathbf{\alpha}}$ and thus alleviate the term $\mathcal{E}_{t,2}$.
For this purpose, we introduce the regularization term $\sum_{j\neq i} \|\mathbf{m}_i^t -\mathbf{m}_j^t\|_1$ into the optimization objective.
Therefore, the selection of layers is determined by solving the following optimization problem:
\begin{equation*}
\text{(P1)} \quad
\begin{aligned}
    \max_{ \{\mathbf{m}_i^t\}} & \sum_{i\in\mathcal{S}^t} \sum_{l\in \mathcal{L}_i^t} \| g_{i,l}(\theta^{t}; \xi_{i}^{t}) \|_2^2 
    - \frac{\lambda}{2} \sum_{i\in\mathcal{S}^t} \sum_{j\neq i} \|\mathbf{m}_i^t - \mathbf{m}_j^t\|_1^2, \\
    \text { s.t. } & \mathcal{R} (\mathbf{m}_i^t) \leq R_i^t, \quad \forall i \in \mathcal{S}^t. \label{eq:resource}
\end{aligned}
\end{equation*}
Here $\lambda \geq 0$ is a weighting constant. Specifically, a large $\lambda$ forces consistent selection across clients, while $\lambda=0$ allows for independent choices among clients.
The constraint in Problem (P1) ensures that the total cost of the selected layers meets the clients' local resource budgets $R_i^t$, and the cost function $\mathcal{R} (\cdot)$ is typically a linear function of $\mathbf{m}_i^t$.

Solving Problem (P1) further gives an effective layer selection strategy for clients.
At the beginning of a training epoch, each participating client $i\in\mathcal{S}^t$ evaluates the current global model $\theta^{t}$ on a batch of local data and obtains the layer-wise gradient $g_{i,l}(\theta^{t}; \xi_{i}^{t}), \forall l \in \mathcal{L}$. Subsequently, clients upload the norms of these gradients $\|g_{i,l}(\theta^{t}; \xi_{i}^{t})\|_2, \forall l \in \mathcal{L}$, which are $L$-dimensional vectors, to the server. With these values, the server can optimize the selected layer sets for clients by solving Problem (P1).

In general, the proposed layer selection strategy leads to client-specific layer sets, determined based on the estimated gradient norms.
Meanwhile, a hyper-parameter $\lambda$ is used to regulate the extent to which clients are encouraged to select the same layer.
In the next section, we empirically demonstrate the benefits of the proposed strategy in effectively identifying critical layers and achieving better model performance than existing methods.

\section{Experiments}\label{sec:experiments}

\subsection{Settings}\label{sec:exp_setup}

\textbf{Datasets \& Models \ }
We conduct a series of experiments on several widely-used image classification datasets, including CIFAR-10~\citep{cifar10} and DomainNet~\citep{domainnet}, text classification dataset, i.e., XGLUE-NC~\citep{xglue}, and five benchmark question-answering (QA) datasets, including SCIQ~\citep{sciq}, OpenbookQA~\citep{openbookqa}, PIQA~\citep{piqa}, ARC-Easy and ARC-Challenge~\citep{arc} datasets. More details of the adopted datasets can be found in in~\cref{appendix:task}.

As for the splitting of datasets, inspired by previous studies~\citep{non_iid_survey, client-acl}, we consider two commonly observed data heterogeneity among clients: (i) {\it Label skew}: adopting Dirichlet distribution to allocate data samples of the CIFAR-10 dataset; (ii) {\it Feature skew}: adopting naturally domain shift on the DomainNet, XGLUE-NC, and QA datasets.
Specifically, each QA dataset is equally divided into two subsets, with each client possessing one subset of samples from one of the five datasets.
We adopt the CLIP model~\citep{clip} for image classification tasks and the multi-lingual \xlmroberta~\citep{xlm-roberta} model on the XGLUE-NC dataset.
In addition, we train a \llama~\citep{llama} model on the QA dataset.

\textbf{Server \& Clients \ }
In the experiments, we set up an FL system with a central server and $N=100$ clients.
In each training epoch, the server randomly selects a subset of $20$ clients, and broadcasts the up-to-date model to these clients for local training.
Besides, there are $N=10$ clients in the QA task and five clients are randomly chosen for training in each epoch.
The resource budgets of clients are limited, which are quantified as the maximum number of layers they can fine-tune in the local training. For example, we use $R_i=1$ to indicate that the resource of client $i$ cannot afford fine-tuning more than 1 layer.
The resource budgets can be identical or heterogeneous among clients.

\textbf{Baselines \ } We compare the proposed layer selection strategy with several competitive baselines, including: (i) \textbf{Top}~\citep{kovaleva2019revealing,elsa}: Clients only fine-tune the top few layers (near the output) based on their task-specific data;
(ii) \textbf{Bottom}~\citep{lee2022surgical}: Clients only fine-tune the bottom few layers (near the input) based on their task-specific data, which can be beneficial for the tasks with input-shift;
(iii) \textbf{Both}~\citep{offsite-tuning}: Clients fine-tune an equal proportion of both the top and bottom layers, which shows the effectiveness for large language models;
(iv) \textbf{SNR}~\citep{early_stop}: Clients fine-tune the layers with higher signal-to-noise ratio (SNR) values, defined as the ratio of the mean of gradient elements to their variance;
(v) \textbf{RGN}~\citep{pruning,lee2022surgical}: Clients fine-tune the layers with higher relative gradient norm (RGN) values, defined as the ratio of gradient norm to the parameter norm;
(vi) Moreover, we consider \textbf{Full} model fine-tuning, i.e., training the entire model, as the performance benchmark.
More implementation details can be found in~\cref{appendix:details}.

\subsection{Comparisons}
We conduct experiments with the {\it identical resource} scenario and the {\it heterogeneous resource} scenario.

\begin{table}[!t]
    \centering
    \caption{Test accuracy (\%) on both image and text datasets, where each client selects $R$ layers for fine-tuning. The best results are highlighted in \textbf{bold}.}
    \vspace{-0.1in}
    \label{tab:one-layer}
        \begin{tabular}{lcccccccc}
            \toprule
             & \multicolumn{2}{c}{CIFAR-10} & \multicolumn{2}{c}{DomainNet} & \multicolumn{2}{c}{XGLUE-NC} & \multicolumn{2}{c}{QA} \\
             \cmidrule(r){2-3} \cmidrule(r){4-5} \cmidrule(r){6-7} \cmidrule(r){8-9}
             & $R=1$ & $R=2$ & $R=1$ & $R=2$ & $R=1$ & $R=2$ & $R=1$ & $R=2$ \\ \midrule
             Full & \multicolumn{2}{c}{95.43} & \multicolumn{2}{c}{90.27} & \multicolumn{2}{c}{82.11} & \multicolumn{2}{c}{65.98} \\ \midrule
             Top & 93.09  & 93.61 & 87.86 & 88.32 & 69.86 & 77.05 & 63.90 & 64.44 \\
             Bottom & 27.38  & 32.81 & 13.80 & 18.63 & 40.43 & 40.60 & 64.18 & 64.60 \\
             Both & - &  94.96 & - & 85.48 & -  & 74.65 & - & 64.41 \\
             SNR & 94.47 & 90.49 & 86.38  & 87.67 & 69.11 & 79.92 & 63.80  & 64.58 \\
             RGN & 92.69 & 89.48 & 88.80 & 87.19 & 74.06 & 79.48 & 63.73 & 64.70 \\
             \midrule
             Ours & \textbf{95.47}  & \textbf{96.05} & \textbf{89.37} & \textbf{89.64} & \textbf{74.95} & \textbf{80.39} & \textbf{64.71}  & \textbf{65.03} \\
             \bottomrule
        \end{tabular}
\end{table}

\textbf{Identical resource scenario \ }
We first consider clients with identical computational resources, i.e., clients select the same number of layers ($R_i=R, \forall i \in \mathcal{N}$) for fine-tuning. The experimental results are shown in~\cref{tab:one-layer}.
From the model performance (accuracy) on the CIFAR-10 and DomainNet datasets, we observe that the proposed strategy demonstrates notable superiority over partial layer fine-tuning baselines.
Specifically, fine-tuning only one layer of the CLIP model achieves comparable performance with tuning the entire model, since the CLIP model is sufficiently powerful to extract useful features and thus requires less training on task-specific data.
This also reveals that selective layer fine-tuning well meets the performance requirement within the resources of clients.

On text datasets, including XGLUE-NC and QA, the proposed layer selection strategy and RGN demonstrate similar performance, both surpassing other baseline methods (especially Top and Both) by noticeable margins.
One potential explanation for this phenomenon could be that they result in similar layer selections, indicating that updating layers with higher relative gradient norms is more beneficial than other strategies, which is consistent with previous study~\citep{lee2022surgical}.
Moving a forward step, the proposed method adopts a flexible and dynamic layer selection strategy instead of fixed strategies, which leads to competitive performance.

\begin{table}[!t]
    \centering
    \caption{Test accuracy (\%) on both image and text datasets, where clients have different resources ($R_i\in [1,4]$). The best results are highlighted in \textbf{bold}.}
    \vspace{-0.1in}
    \label{tab:long-tail}
    \begin{tabular}{lcccc}
        \toprule
         &  CIFAR-10&  DomainNet & XGLUE-NC & QA \\ \midrule
         Full& 95.43  &  90.27  & 82.11  &  65.98 \\ \midrule
         Top &  91.22&  89.29  & 78.17 & 64.10 \\
         Bottom &  27.38 &  23.10  & 50.92 & 64.51 \\
         Both &  89.91 &  86.27  & 73.01  & 64.64 \\
         SNR  & 75.72 & 87.34 & 78.24 & 64.51 \\
         RGN & 93.83  & 88.19 & 79.36 & 64.56 \\
         \midrule
         Ours &  \textbf{95.57}  &  \textbf{89.39} &\textbf{80.18} & \textbf{65.80} \\
         \bottomrule
    \end{tabular}
\end{table}

\textbf{Heterogeneous resource scenario \ }
Further, we conduct experiments with heterogeneous clients, i.e., clients have different local resources and thus tend to select different numbers of layers for fine-tuning. 
Such a heterogeneous resource scenario is more practical~\citep{hetero1,hetero2} and brings additional challenges for selective layer fine-tuning.
Inspired by previous studies~\citep{fednova,fedbuff}, the number of layers to be fine-tuned, denoted as $R_i$ for client $i$, is sampled from a truncated half Normal distribution within $[1,4]$.

The experimental results are shown in \cref{tab:long-tail}, from which we observe that the proposed strategy consistently shows superiority over all the baseline methods on all the datasets. 
Compared with baselines, the proposed strategy allows clients to flexibly determine the proper number of layers to be tuned and effectively find the most important layers.
This advantage arises from enabling clients with sufficient resources to prioritize the selection of more critical layers instead of being restricted to layers in fixed positions.
Overall, these experimental results demonstrate the advantage of the proposed strategy when handling heterogeneity in real-world FL applications.

\subsection{Further Discussions}
\label{exp:discussion}
\textbf{Visualization of selected layers \ }
\begin{figure}[!t]
    \centering
    \includegraphics[width=0.96\linewidth]{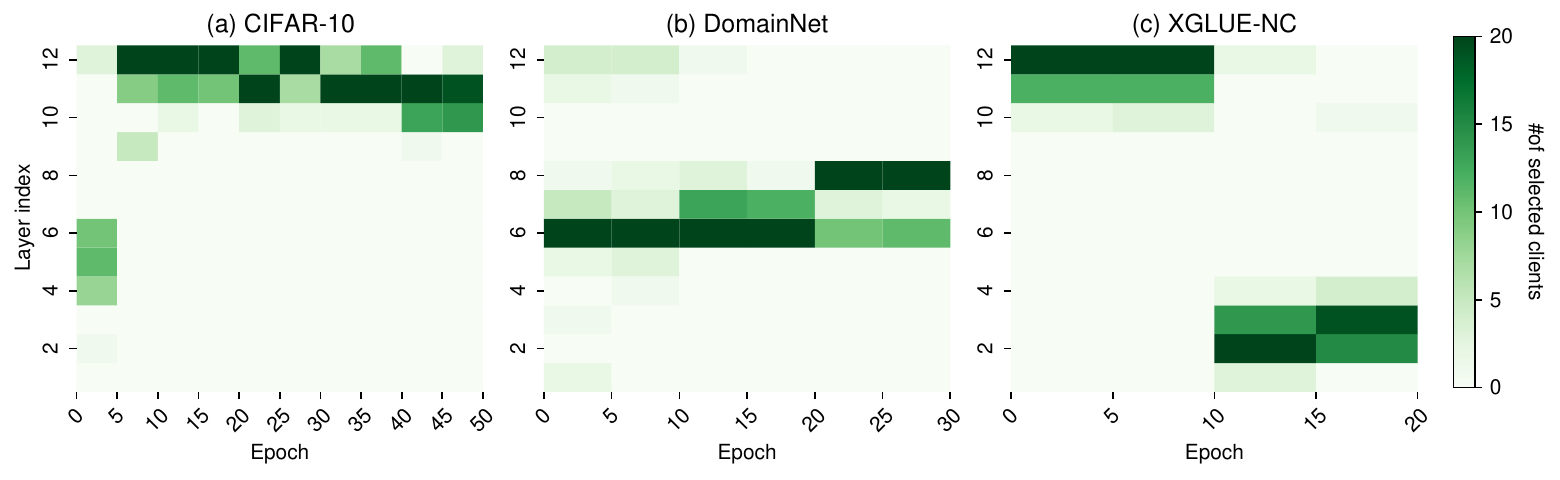}
    \vspace{-0.1in}
    \caption{Visualization of selected layers ($R_i\in [1,4]$).}
    \label{fig:visual}
\end{figure}
For a better understanding on the proposed layer selection strategy, we visualize the selected layers on different datasets in~\cref{fig:visual}.
When fine-tuning the CLIP model on the CIFAR-10 dataset, it can be observed that the focus is primarily on updating a few top layers, indicating that the low-level features (related to middle and bottom layers) are transferable from pre-trained data to downstream tasks.
In comparison, the DomainNet dataset, characterized by a significant domain shift, necessitates extensive tuning of the middle layers in the CLIP model.
Furthermore, on the XGLUE-NC dataset, we observe a clear progression of selected layers for fine-tuning, with a shift from the top layers progressively down to the bottom layers. Such a pattern is markedly different from the trend observed in the image datasets. One possible reason lies in the intrinsic differences between the modalities of text and image data.
These results highlight the necessity for adaptive layer selection and adjustment strategies in FL to accommodate varying dataset properties and domain shifts.

\section{Conclusions}\label{sec:conlusion}
In this paper, we study a practical FL setting for fine-tuning foundation models, where clients are allowed to optimize a subset of layers using their task-specific data.
We carefully consider the impact of both data heterogeneity and device heterogeneity across clients, providing a comprehensive theoretical analysis of the optimization objective of selective layer fine-tuning and global model convergence.
The theoretical analysis offers insights into how the selected layers influence global model training and highlights the role of layer importance and client heterogeneity.
We further propose a novel strategy for layer selection that considers the local data and available resources at clients.
The experimental results demonstrate that the proposed strategy outperforms baseline strategies in improving the global model training performance and even matches full model fine-tuning performance in some scenarios, showing the potential for more efficient and tailored real-world FL applications of the proposed layer selection strategy.

\bibliography{iclr2025_conference}
\bibliographystyle{iclr2025_conference}

\newpage
\appendix


\section{Convergence Analysis: Full Proofs}

\subsection{Useful Lemmas}\label{appendix:lemmas}
\subsubsection{One-Round Loss Decay}
\begin{lemma}\label{lemma:one-round}
With \cref{ass-smooth}, we have:
    \begin{equation}
    \mathbb{E} [f(\theta^{t+1})] - \mathbb{E} [f(\theta^{t})]
    \leq \frac{1}{2\gamma} \mathcal{E}_t + \underbrace{\mathbb{E} \left\langle \sum_{l\in \mathcal{L}_t} \nabla_l h_l^t(\theta^{t}), \theta^{t+1} - \theta^{t} \right\rangle}_{\mathcal{T}_1} + \gamma \underbrace{\mathbb{E} \left[\left\| \theta^{t+1} - \theta^{t} \right\|^2 \right]}_{\mathcal{T}_2}.
    \label{eq:lemma-one-round}
    \end{equation}
\end{lemma}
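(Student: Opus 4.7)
The plan is to prove this one-round descent lemma using a standard three-step argument: apply the $\gamma$-smoothness upper bound, decompose the true gradient into the ``aggregated'' direction $\sum_{l\in \mathcal{L}_t} \nabla_l h_l^t(\theta^{t})$ plus its deviation from $\nabla f(\theta^t)$, and dispatch the cross-term with a carefully weighted Young's inequality so the resulting constants match $\tfrac{1}{2\gamma}$ and $\gamma$ in the statement.

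First I would invoke \cref{ass-smooth} to write
\begin{equation*}
f(\theta^{t+1}) \le f(\theta^{t}) + \langle \nabla f(\theta^{t}), \theta^{t+1}-\theta^{t}\rangle + \tfrac{\gamma}{2}\|\theta^{t+1}-\theta^{t}\|^2.
\end{equation*}
Then I would rewrite the inner product as
\begin{equation*}
\langle \nabla f(\theta^{t}), \theta^{t+1}-\theta^{t}\rangle
= \Bigl\langle \sum_{l\in\mathcal{L}_t}\nabla_l h_l^t(\theta^{t}),\, \theta^{t+1}-\theta^{t}\Bigr\rangle + \Bigl\langle \nabla f(\theta^{t}) - \sum_{l\in\mathcal{L}_t}\nabla_l h_l^t(\theta^{t}),\, \theta^{t+1}-\theta^{t}\Bigr\rangle.
\end{equation*}
The first piece is exactly $\mathcal{T}_1$ once we take expectation. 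For the second piece I would apply Young's inequality $\langle a,b\rangle \le \tfrac{1}{2\gamma}\|a\|^2 + \tfrac{\gamma}{2}\|b\|^2$ with $a = \nabla f(\theta^{t}) - \sum_{l\in\mathcal{L}_t}\nabla_l h_l^t(\theta^{t})$ and $b = \theta^{t+1}-\theta^{t}$, which produces $\tfrac{1}{2\gamma}\mathcal{E}_t$ by the definition of $\mathcal{E}_t$ and an extra $\tfrac{\gamma}{2}\|\theta^{t+1}-\theta^{t}\|^2$. Combining with the smoothness term yields a coefficient of $\gamma$ on $\|\theta^{t+1}-\theta^{t}\|^2$, and taking expectations on both sides gives exactly \cref{eq:lemma-one-round}.

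The only subtlety, and what I would flag as the ``main'' step, is selecting the Young weight $\gamma$ (rather than some other constant) so that (i) the two quadratic-in-displacement pieces collapse into a single $\gamma\mathcal{T}_2$, matching the coefficient claimed in the lemma, and (ii) the error term $\mathcal{E}_t$ comes out with the sharp coefficient $\tfrac{1}{2\gamma}$ that later allows \cref{theorem1} to absorb $\mathcal{E}_t$ into $\mathcal{O}(\mathcal{E}_{t,1}+\mathcal{E}_{t,2})$ after invoking \cref{lemma:epsilon}. Beyond this calibration, no other obstacles arise: the expectation is taken with respect to the mini-batch sampling at round $t$, and no unbiasedness property of $\theta^{t+1}-\theta^{t}$ is needed here since both $\mathcal{T}_1$ and $\mathcal{T}_2$ are written directly as expectations of the random iterate.
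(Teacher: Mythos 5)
Your proposal is correct and follows essentially the same route as the paper's proof: smoothness, splitting the inner product into $\mathcal{T}_1$ plus the deviation term, and Young's inequality with weight $\gamma$ so that the two $\tfrac{\gamma}{2}\mathcal{T}_2$ contributions combine into $\gamma\mathcal{T}_2$. No gaps.
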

\begin{proof}
    
We begin with analyzing the loss decay by using $\gamma$-smoothness in \cref{ass-smooth} as follows:
\begin{align}
    & \mathbb{E} [f(\theta^{t+1})] - \mathbb{E} [f(\theta^{t})] \\
    \leq & \mathbb{E} \langle \nabla f(\theta^{t}), \theta^{t+1} - \theta^{t} \rangle + \frac{\gamma}{2} \mathbb{E} [\left\| \theta^{t+1} - \theta^{t} \right\|^2] \label{eq1}\\
    = & \mathbb{E} \left\langle \nabla f(\theta^{t}) - \sum_{l\in \mathcal{L}_t} \nabla_l h_l^t(\theta^{t}) + \sum_{l\in \mathcal{L}_t} \nabla_l h_l^t(\theta^{t}), \theta^{t+1} - \theta^{t} \right\rangle + \frac{\gamma}{2} \mathbb{E} [\left\| \theta^{t+1} - \theta^{t} \right\|^2] \\
    = & \underbrace{\mathbb{E} \left\langle \nabla f(\theta^{t}) - \sum_{l\in \mathcal{L}_t} \nabla_l h_l^t(\theta^{t}), \theta^{t+1} - \theta^{t} \right\rangle}_{\mathcal{T}_0} 
    + \underbrace{\mathbb{E} \left\langle \sum_{l\in \mathcal{L}_t} \nabla_l h_l^t(\theta^{t}), \theta^{t+1} - \theta^{t} \right\rangle}_{\mathcal{T}_1} 
    + \frac{\gamma}{2} \underbrace{\mathbb{E} \left[\left\| \theta^{t+1} - \theta^{t} \right\|^2 \right]}_{\mathcal{T}_2}.
    \label{eq2}
\end{align}

By Young's inequality, we upper bound the term $\mathcal{T}_0$ as:
\begin{align}
    \mathcal{T}_0 = & \mathbb{E} \left\langle \nabla f(\theta^{t}) - \sum_{l\in \mathcal{L}_t} \nabla_l h_l^t(\theta^{t}), \theta^{t+1} - \theta^{t} \right\rangle \\
    \leq & \frac{1}{2\gamma} \underbrace{\mathbb{E} \left[\left\| \nabla f(\theta^{t}) - \sum_{l\in \mathcal{L}_t} \nabla_l h_l^t(\theta^{t}) \right\|^2 \right]}_{\mathcal{E}_t} + \frac{\gamma}{2} \mathbb{E} \left[ \left\| \theta^{t+1} - \theta^{t} \right\|^2 \right] \\
    = & \frac{1}{2\gamma} \mathcal{E}_t + \frac{\gamma}{2} \mathcal{T}_2. \label{eq3}
\end{align}

Plugging \eqref{eq3} back into \eqref{eq2} gives the result in \eqref{eq:lemma-one-round}.

\end{proof}

\subsubsection{Analyzing $\mathcal{E}_t$: Proof of \cref{lemma:epsilon}}
In this subsection, we prove the result in \cref{lemma:epsilon}.

We begin with decomposing the term $\mathcal{E}_{t}$ using the Jensen's inequality as:
\begin{equation}
    \mathcal{E}_{t} \leq 2 \underbrace{\|\nabla f(\theta^{t}) - \sum_{l\in \mathcal{L}_t} \nabla_l f(\theta^{t})\|_2^2 }_{\tilde{\mathcal{E}}_{t,1}} 
    + 2 \underbrace{\|\sum_{l\in \mathcal{L}_t} \nabla_l f(\theta^{t}) - \nabla_l h_l^t(\theta^{t}) \|_2^2 }_{\tilde{\mathcal{E}}_{t,2}}.
    \label{eq:e_t_repeat}
\end{equation}

For the first term $\tilde{\mathcal{E}}_{1,t}$, we directly obtain:
\begin{equation}
    \tilde{\mathcal{E}}_{1,t}
    = \mathbb{E} \left[ \left\| \nabla f(\theta^{t}) - \sum_{l\in \mathcal{L}_t} \nabla_l f(\theta^{t}) \right\|^2 \right]
    = \mathbb{E} \left[ \left\|\sum_{l\notin \mathcal{L}_t} \nabla_l f(\theta^{t}) \right\|^2 \right].
    \label{eq:upper_e_1}
\end{equation}

Afterwards, we derive the value of $\tilde{\mathcal{E}}_{2,t}$ as follows:
\begin{align}
    \tilde{\mathcal{E}}_{2,t} = & \mathbb{E} \left[\left\| \sum_{l\in \mathcal{L}_t} \nabla_l f(\theta^{t}) - \sum_{l\in \mathcal{L}_t} \nabla_l h_l^t(\theta^{t}) \right\|^2 \right] \\
    = & \mathbb{E} \left[\left\| \sum_{l\in \mathcal{L}_t} \sum_{i\in\mathcal{N}} \alpha_i \nabla_l f_i(\theta^{t}) - \sum_{l\in \mathcal{L}_t} \sum_{i\in\mathcal{S}_t} w_{i,l}^{t} \nabla_l f_i(\theta^{t}) \right\|^2 \right] \\
    = & \mathbb{E} \left[\left\| \sum_{l\in \mathcal{L}_t} \sum_{i\in\mathcal{N}} \alpha_i \nabla_l f_i(\theta^{t}) - \sum_{l\in \mathcal{L}_t} \sum_{i\in\mathcal{N}} w_{i,l}^{t} \nabla_l f_i(\theta^{t}) \right\|^2 \right] \\
    = & \sum_{l\in \mathcal{L}_t} \mathbb{E} \left[\left\| \sum_{i\in\mathcal{N}} \frac{w_{i,l}^{t} - \alpha_i}{\sqrt{\alpha_i}} \sqrt{\alpha_i}  \left( \nabla_l f_i(\theta^{t}) - \nabla_l f(\theta^{t}) \right) \right\|^2 \right] \\
    \leq & \sum_{l\in \mathcal{L}_t} \left[\sum_{i\in\mathcal{N}} \frac{(w_{i,l}^{t} - \alpha_i)^2}{\alpha_i} \right] \left[ \sum_{i\in\mathcal{N}} \alpha_i \mathbb{E} \left[\left\| \nabla_l f_i(\theta^{t}) - \nabla_l f(\theta^{t}) \right\|^2 \right] \right] \label{eq16} \\
    \leq & \sum_{l\in \mathcal{L}_t} \chi_{\mathbf{w}_{t,l}\|\mathbf{\alpha}} \kappa_l^2.
    \label{eq:upper_e_2}
\end{align}
where \eqref{eq16} follows the Cauchy–Schwarz inequality and \eqref{eq:upper_e_2} applies \cref{ass-diversity}.

By substituting the RHS of \eqref{eq:upper_e_1} and \eqref{eq:upper_e_2} into \eqref{eq:e_t_repeat}, we complete the proof.
\qed

\subsection{Convergence Analysis for Single-Step Case}\label{appendix:single-step}
In this subsection, we consider $\tau=1$ and prove \cref{theorem1}.

We derive the value of $\mathcal{T}_1$ as follows:
\begin{equation}
    \mathcal{T}_1 
    = \mathbb{E} \left\langle \sum_{l\in \mathcal{L}_t} \nabla_l h_l^t(\theta^{t}), - \eta \sum_{l\in \mathcal{L}_t} \nabla_l h_l^t(\theta^{t}) \right\rangle 
    = -\eta \mathbb{E} \left[\left\| \sum_{l\in L_t} \nabla_l h_l^t(\theta^{t}) \right\|^2 \right].
\end{equation}

Afterwards, we give an upper bound for the term $\mathcal{T}_2$ as follows:
\begin{align}
    \mathcal{T}_2
    = & \left[ \left\| \eta \sum_{l\in \mathcal{L}_t}\sum_{i\in\mathcal{S}^t} w_{i,l}^{t} g_{i,l}(\theta^{t}; \xi_{i}^{t})\right\|^2 \right] \\
    \leq & \eta^2 \mathbb{E} \left[\left\| \sum_{l\in L_t} \nabla_l h_l^t(\theta^{t}) \right\|^2 \right] + \eta^2 \sigma^2 \label{eq4},
\end{align}
where \eqref{eq4} follows \cref{ass-gradient}.

Using the result in \cref{lemma:one-round}, we have:
\begin{align}
    & \mathbb{E} [f(\theta^{t+1})] - \mathbb{E} [f(\theta^{t})] \nonumber \\
    \leq & \frac{1}{2\gamma} \mathcal{E}_t - \eta \mathbb{E} \left[\left\| \sum_{l\in L_t} \nabla_l h_l^t(\theta^{t}) \right\|^2 \right] + \gamma \left[ \eta^2 \mathbb{E} \left[\left\| \sum_{l\in L_t} \nabla_l h_l^t(\theta^{t}) \right\|^2 \right] + \eta^2 \sigma^2 \right] \\
    = & \frac{1}{2\gamma} \mathcal{E}_t - \eta (1-\gamma \eta) \mathbb{E} \left[\left\| \sum_{l\in L_t} \nabla_l h_l^t(\theta^{t}) \right\|^2 \right]  + \gamma \eta^2 \sigma^2. \label{eq5}
\end{align}

We define a constant $C \triangleq 1-\gamma \eta > 0$ and arrange the terms in \eqref{eq5} as follows:
\begin{align}
    \mathbb{E} \left[\left\| \sum_{l\in L_t} \nabla_l h_l^t(\theta^{t}) \right\|^2 \right] 
    \leq \frac{1}{\eta C} \left[ \mathbb{E} [f(\theta^{t})] - \mathbb{E} [f(\theta^{t+1})] \right] + \frac{1}{2\gamma\eta C} \mathcal{E}_t + \frac{\gamma \eta}{C} \sigma^2. \label{eq17}
\end{align}

By Jensen's inequality, we have:
\begin{align}
    & \mathbb{E} \left[\left\| \nabla f(\theta^{t}) \right\|^2 \right]  \\
    = & \mathbb{E} \left[\left\| \nabla f(\theta^{t}) - \sum_{l\in L_t} \nabla_l h_l^t(\theta^{t}) + \sum_{l\in L_t} \nabla_l h_l^t(\theta^{t}) \right\|^2 \right] \\
    \leq & 2 \mathbb{E} \left[\left\| \nabla f(\theta^{t}) - \sum_{l\in L_t} \nabla_l h_l^t(\theta^{t}) \right\|^2 \right]
    + 2 \mathbb{E} \left[\left\| \sum_{l\in L_t} \nabla_l h_l^t(\theta^{t}) \right\|^2 \right] \\
    = & 2 \mathcal{E}_{t} + 2 \mathbb{E} \left[\left\| \sum_{l\in L_t} \nabla_l h_l^t(\theta^{t}) \right\|^2 \right].
    \label{eq14}
\end{align}

Combining \eqref{eq17} and \eqref{eq14} gives:
\begin{align}
    \mathbb{E} \left[\left\| \nabla f(\theta^{t}) \right\|^2 \right]
    \leq \frac{2}{\eta C} \left[ \mathbb{E} [f(\theta^{t})] - \mathbb{E} [f(\theta^{t+1})] \right] + \left( \frac{1}{\gamma\eta C} + 2\right) \mathcal{E}_t + \frac{2 \gamma \eta}{C} \sigma^2. \label{eq6}
\end{align}

We sum up both sides of \eqref{eq6} over $t=0,1,\dots,T-1$ and divide them by $T$ to obtain the following result:
\begin{align}
    & \frac{1}{T} \sum_{t=1}^{T} \mathbb{E} \left[\left\| \nabla f(\theta^{t}) \right\|^2 \right] \nonumber \\
    \leq & \frac{2}{\eta C T} \left[ \mathbb{E} [f(\theta^{0})] - \mathbb{E} [f(\theta^{T})] \right] + \frac{1}{T} \sum_{t=1}^{T} \left( \frac{1}{\gamma\eta C} + 2\right) \mathcal{E}_t + \frac{2 \gamma \eta}{C} \sigma^2 \\
    \leq & \frac{2}{\eta C T} \left[ f(\theta^{0}) - f(\theta^{*}) \right] + \frac{1}{T} \sum_{t=1}^{T} \left( \frac{1}{\gamma\eta C} + 2\right) \mathcal{E}_t + \frac{2 \gamma \eta}{C} \sigma^2 \\
    \leq & \frac{2}{\eta C T} \left[ f(\theta^{0}) - f(\theta^{*}) \right] + \frac{2 \gamma \eta}{C} \sigma^2 + \frac{1}{T} \sum_{t=1}^{T} \left( \frac{1}{\gamma\eta C} + 2\right) (\mathcal{E}_{t,1} + \mathcal{E}_{t,2}).
\end{align}

\qed

\subsection{Convergence Analysis for Multi-Step Case}\label{appendix:multi-step}
Consider the general case where $\tau>1$. We characterize the convergence in the following theorem and note that the impact of $\mathcal{E}_{t,1} + \mathcal{E}_{t,2}$ is similar to that in \cref{theorem1}.
\begin{theorem}
Let $C^\prime \triangleq  1 - 4\eta\tau - 8\eta^2\gamma^2\tau(\tau-1) - 32\eta^3\gamma^2\tau^2(\tau-1) > 0$ and $A_{\tau} \triangleq \eta + 2\eta^2 \gamma^2\tau(\tau-1) + 8 \eta^3 \gamma^2 \tau^2(\tau-1)$. With Assumptions \ref{ass-smooth}-\ref{ass-diversity}, we have:
    \begin{align}
    \frac{1}{T} \sum_{t=1}^{T} \mathbb{E} \left[\left\| \nabla f(\theta^{t}) \right\|^2 \right] 
    \leq & \frac{2}{\eta\tau C^\prime T} \left[ f(\theta^{0}) - f(\theta^{*}) \right] + \frac{4 A_{\tau}}{C^\prime} \sigma^2 + \frac{1}{T} \sum_{t=1}^{T} \left( \frac{1}{\eta\tau \gamma C^\prime} + 2\right) (\mathcal{E}_{t,1} + \mathcal{E}_{t,2}).
    \end{align}
\end{theorem}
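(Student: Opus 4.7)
The plan is to lift the single-step argument in \cref{appendix:single-step} to $\tau>1$, with the additional complication that local iterates $\theta_i^{t,k}$ drift away from the broadcast model $\theta^t$ across the $\tau$ inner steps, and this drift feeds back into both $\mathcal{T}_1$ and $\mathcal{T}_2$ from \cref{lemma:one-round}. The starting point is still Lemma~\ref{lemma:one-round}, which holds for any update $\theta^{t+1}-\theta^t$ and so applies unchanged. The novelty is that now $\theta^{t+1}-\theta^t=-\eta\sum_{k=0}^{\tau-1}\sum_{l\in\mathcal{L}_t}\sum_{i\in\mathcal{S}^t}w_{i,l}^t g_{i,l}(\theta_i^{t,k};\xi_i^{t,k})$, whose conditional mean is $-\eta\sum_{k=0}^{\tau-1}\sum_{l\in\mathcal{L}_t}\nabla_l h_l^t(\theta_i^{t,k})$ rather than $-\eta\sum_l\nabla_l h_l^t(\theta^t)$.

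First I would rewrite $\mathcal{T}_1$ by inserting and subtracting $\nabla_l h_l^t(\theta^t)$ inside the inner product, yielding a clean $-\eta\tau\|\sum_l\nabla_l h_l^t(\theta^t)\|^2$ contribution plus a cross term. The cross term is then bounded by Young's inequality in terms of $\|\sum_l\nabla_l h_l^t(\theta^t)\|^2$ and the client drift $\sum_{i,k}\|\nabla_l h_l^t(\theta_i^{t,k})-\nabla_l h_l^t(\theta^t)\|^2$, which in turn is controlled by $\gamma^2\sum_{i,k}\mathbb{E}\|\theta_i^{t,k}-\theta^t\|^2$ via \cref{ass-smooth}. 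For $\mathcal{T}_2=\mathbb{E}\|\theta^{t+1}-\theta^t\|^2$ I would split the stochastic sum into its mean plus a noise term; the noise contributes $\eta^2\tau\sigma^2$ under \cref{ass-gradient}, and the mean term is again split into a $\tau^2\|\sum_l\nabla_l h_l^t(\theta^t)\|^2$ piece and a drift residual.

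Next I would derive a recursion for the drift $D_{t,k}\triangleq\mathbb{E}\|\theta_i^{t,k}-\theta^t\|^2$. Unrolling the local SGD step in \eqref{eq:local_sgd}, applying $\|a+b\|^2\le 2\|a\|^2+2\|b\|^2$ and the variance bound, produces $D_{t,k}\le 2\eta^2 k\sigma^2+2\eta^2 k\sum_{j<k}(\|\sum_l\nabla_l f_i(\theta_i^{t,j})\|^2)$, and one more smoothness splitting reduces this to terms in $\sigma^2$, in $\|\nabla f(\theta^t)\|^2$ (via $\mathcal{E}_{t,1}+\mathcal{E}_{t,2}$), and in $D_{t,j}$ itself. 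Summing $k=0,\dots,\tau-1$ and solving the resulting discrete inequality gives $\sum_k D_{t,k}=\mathcal{O}(\eta^2\tau(\tau-1))$ times a combination of $\sigma^2$, $\mathcal{E}_{t,1}+\mathcal{E}_{t,2}$, and $\|\nabla f(\theta^t)\|^2$, which is the source of the $\tau(\tau-1)$ factors appearing in $C'$ and $A_\tau$.

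Plugging these bounds back into \cref{lemma:one-round}, using \cref{lemma:epsilon} to replace $\mathcal{E}_t$ by $2(\mathcal{E}_{t,1}+\mathcal{E}_{t,2})$, and then using the Jensen inequality $\|\nabla f(\theta^t)\|^2\le 2\mathcal{E}_t+2\|\sum_l\nabla_l h_l^t(\theta^t)\|^2$ exactly as in \eqref{eq14}, produces a per-round inequality of the form $A_\tau\mathbb{E}\|\nabla f(\theta^t)\|^2+(\text{stuff})\le\mathbb{E}[f(\theta^t)-f(\theta^{t+1})]+\cdots$, where the coefficient of $\|\nabla f(\theta^t)\|^2$ on the left is precisely $\tfrac{1}{2}\eta\tau C'$ after absorbing the drift feedback. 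The main obstacle is bookkeeping: several drift and noise terms appear with coefficients proportional to $\eta^2\gamma^2\tau(\tau-1)$ and $\eta^3\gamma^2\tau^2(\tau-1)$, and only by carefully tracking them can one verify that the $\|\sum_l\nabla_l h_l^t(\theta^t)\|^2$ coefficient collects into $C'$ and the $\sigma^2$ coefficient collects into $A_\tau$ as stated. Telescoping over $t=0,\dots,T-1$, dividing by $T$, and using $f(\theta^T)\ge f(\theta^*)$ then yields the claimed bound.
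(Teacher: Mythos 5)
Your proposal follows essentially the same route as the paper's proof: starting from Lemma~\ref{lemma:one-round}, decomposing $\mathcal{T}_1$ and $\mathcal{T}_2$ into a $\left\|\sum_{l}\nabla_l h_l^t(\theta^{t})\right\|^2$ part, a noise part of order $\eta^2\tau\sigma^2$, and a client-drift part controlled via smoothness, then bounding the accumulated drift by a quantity of order $\eta^2\tau^2(\tau-1)$ (the paper cites Lemma~22 of the personalized-FL reference for exactly the unrolled recursion you describe), collecting coefficients into $C^\prime$ and $A_\tau$, and finishing with the Jensen step of \eqref{eq14} and telescoping. The only cosmetic difference is that the paper's drift bound is stated in terms of $\left\|\sum_{l}\nabla_l h_l^t(\theta^{t})\right\|^2$ rather than $\left\|\nabla f(\theta^{t})\right\|^2$, which does not change the argument.
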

\begin{proof}

In \cref{lemma:one-round}, the term $\mathcal{T}_1$ is related to client drift caused by multiple local SGD steps, which can be upper bounded as follows:
\begin{align}
    & \mathcal{T}_1 \nonumber \\
    = & - \eta \sum_{k=0}^{\tau-1} \mathbb{E} \left\langle \sum_{l\in \mathcal{L}_t} \nabla_l h_l^t(\theta^{t}), \sum_{l\in \mathcal{L}_t} \sum_{i\in\mathcal{N}} w_{i,l}^{t} \nabla_{l} f_i(\theta_{i}^{t,k}) \right\rangle \\
    = & - \eta \sum_{k=0}^{\tau-1} \mathbb{E} \left\langle \sum_{l\in \mathcal{L}_t} \nabla_l h_l^t(\theta^{t}), \sum_{l\in \mathcal{L}_t} \nabla_l h_l^t(\theta^{t}) \right\rangle \nonumber \\
    & + \eta \sum_{k=0}^{\tau-1} \mathbb{E} \left\langle \sum_{l\in \mathcal{L}_t} \nabla_l h_l^t(\theta^{t}), \sum_{l\in \mathcal{L}_t} \nabla_l h_l^t(\theta^{t})- \sum_{l\in \mathcal{L}_t} \sum_{i\in\mathcal{N}} w_{i,l}^{t} \nabla_{l} f_i(\theta_{i}^{t,k}) \right\rangle \\
    \leq & - \frac{\eta}{2} \sum_{k=0}^{\tau-1} \mathbb{E} \left[\left\| \sum_{l\in \mathcal{L}_t} \nabla_l h_l^t(\theta^{t}) \right\|^2 \right] + \frac{\eta}{2} \sum_{k=0}^{\tau-1} \mathbb{E} \left[\left\| \sum_{l\in \mathcal{L}_t} \nabla_l h_l^t(\theta^{t}) - \sum_{l\in \mathcal{L}_t} \sum_{i\in\mathcal{N}} w_{i,l}^{t} \nabla_{l} f_i(\theta_{i}^{t,k}) \right\|^2 \right] \label{eq7} \\
    = & - \frac{\eta\tau}{2} \mathbb{E} \left[\left\| \sum_{l\in \mathcal{L}_t} \nabla_l h_l^t(\theta^{t}) \right\|^2 \right] 
    + \frac{\eta}{2} \sum_{k=0}^{\tau-1} \mathbb{E} \left[\left\| \sum_{l\in \mathcal{L}_t} \sum_{i\in\mathcal{N}} w_{i,l}^{t} \nabla_l f_i(\theta^{t}) - \sum_{l\in \mathcal{L}_t} \sum_{i\in\mathcal{N}} w_{i,l}^{t} \nabla_{l} f_i(\theta_{i}^{t,k}) \right\|^2 \right] \\
    \leq & - \frac{\eta\tau}{2} \mathbb{E} \left[\left\| \sum_{l\in \mathcal{L}_t} \nabla_l h_l^t(\theta^{t}) \right\|^2 \right] 
    + \frac{\eta \gamma^2}{2} \underbrace{\sum_{k=0}^{\tau-1} \mathbb{E} \left[\left\| \sum_{l\in \mathcal{L}_t} \sum_{i\in\mathcal{N}} w_{i,l}^{t} \left( \theta^{t} -\theta_{i}^{t,k} \right) \right\|^2 \right]}_{\mathcal{T}_4},  \label{eq82}
\end{align}
where \eqref{eq7} uses the inequality $\left\langle \mathbf{a}, \mathbf{b} \right\rangle\leq \frac{\|\mathbf{a}\|^2}{2} + \frac{\|\mathbf{b}\|^2}{2}$, and \eqref{eq82} follows \cref{ass-smooth}.

Then we analyze the term $\mathcal{T}_2$ as follows:
\begin{align}
    & \mathcal{T}_2 \nonumber \\
    \leq & \eta^2 \mathbb{E} \left[\left\| \sum_{k=0}^{\tau-1} \sum_{l\in \mathcal{L}_t} \sum_{i\in\mathcal{N}} w_{i,l}^{t} \nabla_{l} f_i(\theta_{i}^{t,k}) \right\|^2 \right] + \eta^2 \tau \sigma^2 \label{eq8} \\
    \leq & \eta^2 \tau \sum_{k=0}^{\tau-1} \mathbb{E} \left[\left\| \sum_{l\in \mathcal{L}_t} \sum_{i\in\mathcal{N}} w_{i,l}^{t} \nabla_{l} f_i(\theta_{i}^{t,k}) - \sum_{l\in \mathcal{L}_t} \sum_{i\in\mathcal{N}} w_{i,l}^{t} \nabla_{l} f_i(\theta^{t}) + \sum_{l\in \mathcal{L}_t} \sum_{i\in\mathcal{N}} w_{i,l}^{t} \nabla_{l} f_i(\theta^{t}) \right\|^2 \right] \nonumber \\
    & + \eta^2 \tau \sigma^2 \label{eq9} \\
    \leq & 2 \eta^2 \tau \sum_{k=0}^{\tau-1} \mathbb{E} \left[\left\| \sum_{l\in \mathcal{L}_t} \sum_{i\in\mathcal{N}} w_{i,l}^{t} \nabla_{l} f_i(\theta_{i}^{t,k}) - \sum_{l\in \mathcal{L}_t} \sum_{i\in\mathcal{N}} w_{i,l}^{t} \nabla_{l} f_i(\theta^{t}) \right\|^2 \right] \nonumber \\
    & + 2 \eta^2 \tau \sum_{k=0}^{\tau-1} \mathbb{E} \left[\left\| \sum_{l\in \mathcal{L}_t} \nabla_l h_l^t(\theta^{t}) \right\|^2 \right] + \eta^2 \tau \sigma^2 \label{eq10} \\
    \leq & 2 \eta^2 \gamma^2 \tau \sum_{k=0}^{\tau-1} \mathbb{E} \left[\left\| \sum_{l\in \mathcal{L}_t} \sum_{i\in\mathcal{N}} w_{i,l}^{t} \left( \theta_{i}^{t,k} - \theta^{t} \right) \right\|^2 \right] + 2 \eta^2 \tau^2 \mathbb{E} \left[\left\| \sum_{l\in \mathcal{L}_t} \nabla_l h_l^t(\theta^{t}) \right\|^2 \right] + \eta^2 \tau \sigma^2  \label{eq11}\\
    = & 2 \eta^2 \gamma^2 \tau \mathcal{T}_4 + 2 \eta^2 \tau^2 \mathbb{E} \left[\left\| \sum_{l\in \mathcal{L}_t} \nabla_l h_l^t(\theta^{t}) \right\|^2 \right] + \eta^2 \tau \sigma^2, \label{eq12}
\end{align}
where \eqref{eq8} follows \cref{ass-gradient}, \eqref{eq9}-\eqref{eq10} apply the Jensen's inequality, and \eqref{eq11} follows \cref{ass-smooth}.

Following Lemma 22 in~\citep{pillutla2022federated}, $\mathcal{T}_4$ can be upper bounded as:
\begin{align}
    \mathcal{T}_4 \leq & \sum_{k=0}^{\tau-1} \mathbb{E} \left[\left\|  \sum_{l\in \mathcal{L}_t} \sum_{i\in\mathcal{N}} w_{i,l}^{t} \theta^{t} -\theta_{i}^{t,k} \right\|^2 \right]  \\
    \leq & 8\eta^2\tau^2(\tau-1) \mathbb{E} \left[\left\|  \sum_{l\in \mathcal{L}_t} \sum_{i\in\mathcal{N}} w_{i,l}^{t} \nabla_l f_i(\theta^{t}) \right\|^2 \right] + \sum_{l\in \mathcal{L}_t} \sum_{i\in\mathcal{N}} w_{i,l}^{t} 4\eta^2\tau^2(\tau-1) \sigma_l^2 \\
    = & 8\eta^2\tau^2(\tau-1) \mathbb{E} \left[ \left\|  \sum_{l\in \mathcal{L}_t} \nabla_l h_l^t(\theta^{t}) \right\|^2 \right] + 4\eta^2\tau^2(\tau-1) \sigma^2. \label{eq13}
\end{align}

Plugging \eqref{eq8},\eqref{eq12} and \eqref{eq13} back into \eqref{eq:lemma-one-round}, we have the following result:
\begin{align}
    & \mathbb{E} [f(\theta^{t+1})] - \mathbb{E} [f(\theta^{t})] \\
    \leq & \frac{1}{2\gamma} \mathcal{E}_t - \frac{\eta\tau}{2} \mathbb{E} \left[\left\| \sum_{l\in \mathcal{L}_t} \nabla_l h_l^t(\theta^{t}) \right\|^2 \right] + \frac{\eta \gamma^2}{2} \mathcal{T}_4 + 2 \eta^2 \gamma^2 \tau \mathcal{T}_4 + 2 \eta^2 \tau^2 \mathbb{E} \left[\left\| \sum_{l\in \mathcal{L}_t} \nabla_l h_l^t(\theta^{t}) \right\|^2 \right] + \eta^2 \tau \sigma^2 \\
    = & \frac{1}{2\gamma} \mathcal{E}_t - \frac{\eta\tau}{2} \left( 1 - 4\eta\tau \right) \mathbb{E} \left[\left\| \sum_{l\in \mathcal{L}_t} \nabla_l h_l^t(\theta^{t}) \right\|^2 \right] + \eta^2 \tau \sigma^2 + \left(\frac{\eta \gamma^2}{2} + 2 \eta^2 \gamma^2 \tau \right) \mathcal{T}_4 \\
    = & \frac{1}{2\gamma} \mathcal{E}_t - \frac{\eta\tau}{2} \left( 1 - 4\eta\tau \right) \mathbb{E} \left[\left\| \sum_{l\in \mathcal{L}_t} \nabla_l h_l^t(\theta^{t}) \right\|^2 \right] + \eta^2 \tau \sigma^2 \nonumber \\
    & + \left(\frac{\eta \gamma^2}{2} + 2 \eta^2 \gamma^2 \tau \right) \left\{ 8\eta^2\tau^2(\tau-1) \mathbb{E} \left[ \left\|  \sum_{l\in \mathcal{L}_t} \nabla_l h_l^t(\theta^{t}) \right\|^2 \right] + 4\eta^2\tau^2(\tau-1) \sigma^2 \right\} \\
    = & \frac{1}{2\gamma} \mathcal{E}_t - \frac{\eta\tau}{2} \left[ 1 - 4\eta\tau - 8\eta^2\gamma^2\tau(\tau-1) - 32\eta^3\gamma^2\tau^2(\tau-1) \right] \mathbb{E} \left[\left\| \sum_{l\in \mathcal{L}_t} \nabla_l h_l^t(\theta^{t}) \right\|^2 \right] \nonumber \\
    &  + \left(\eta^2 \tau + 2\eta^3 \gamma^2\tau^2(\tau-1) + 8 \eta^4 \gamma^2 \tau^3(\tau-1) \right)\sigma^2.
\end{align}
Let $C^\prime \triangleq  1 - 4\eta\tau - 8\eta^2\gamma^2\tau(\tau-1) - 32\eta^3\gamma^2\tau^2(\tau-1) > 0$ and $A_{\tau} \triangleq \eta + 2\eta^2 \gamma^2\tau(\tau-1) + 8 \eta^3 \gamma^2 \tau^2(\tau-1)$. We have:
\begin{equation}
    \mathbb{E} \left[\left\| \sum_{l\in \mathcal{L}_t} \nabla_l h_l^t(\theta^{t}) \right\|^2 \right] \leq
    \frac{2}{\eta\tau C^\prime} \left[ \mathbb{E} [f(\theta^{t})] - \mathbb{E} [f(\theta^{t+1})]  \right] + \frac{1}{\eta\tau \gamma C^\prime} \mathcal{E}_t+ \frac{2}{C^\prime} A_{\tau} \sigma^2.
\end{equation}

Using the result in \eqref{eq14}, we have:
\begin{equation}
    \mathbb{E} \left[\left\| \nabla f(\theta^{t}) \right\|^2 \right] 
    \leq \frac{4}{\eta\tau C^\prime} \left[ \mathbb{E} [f(\theta^{t})] - \mathbb{E} [f(\theta^{t+1})]  \right] + \left(\frac{1}{\eta\tau \gamma C^\prime} + 2\right) \mathcal{E}_t + \frac{4 A_{\tau}}{C^\prime} \sigma^2. \label{eq15}
\end{equation}

We sum up both sides of \eqref{eq15} over $t=0,1,\dots,T-1$ and divide them by $T$ to obtain the following result:
\begin{align}
    & \frac{1}{T} \sum_{t=1}^{T} \mathbb{E} \left[\left\| \nabla f(\theta^{t}) \right\|^2 \right] \nonumber \\
    \leq & \frac{2}{\eta\tau C^\prime T} \left[ \mathbb{E} [f(\theta^{0})] - \mathbb{E} [f(\theta^{T})] \right] + \frac{1}{T} \sum_{t=1}^{T} \left( \frac{1}{\eta\tau \gamma C^\prime} + 2\right) \mathcal{E}_t + \frac{4 A_{\tau}}{C^\prime} \sigma^2 \\
    \leq & \frac{2}{\eta\tau C^\prime T} \left[ f(\theta^{0}) - f(\theta^{*}) \right] + \frac{1}{T} \sum_{t=1}^{T} \left( \frac{1}{\eta\tau \gamma C^\prime} + 2\right) \mathcal{E}_t + \frac{4 A_{\tau}}{C^\prime} \sigma^2 \\
    \leq & \frac{2}{\eta\tau C^\prime T} \left[ f(\theta^{0}) - f(\theta^{*}) \right] + \frac{4 A_{\tau}}{C^\prime} \sigma^2 + \frac{1}{T} \sum_{t=1}^{T} \left( \frac{1}{\eta\tau \gamma C^\prime} + 2\right) (\mathcal{E}_{t,1} + \mathcal{E}_{t,2}).
\end{align}

\end{proof}

\section{Experimental Details}\label{appendix:setup}

We implement all methods with PyTorch and run experiments on Nvidia V100 GPUs.
For fair comparisons, we adopt the same training epochs and hyper-parameters for all methods.

\subsection{Training Tasks}\label{appendix:task}

To evaluate the methods in various scenarios with different non-IID patterns, we consider two image classification tasks and two text classification tasks.
\begin{table}[h]
\centering
\caption{Summary of datasets.}
\label{tab:datasets}
\begin{tabular}{cccc}
\toprule
Dataset & Data Type & Non-IID Type & Partition \\ \midrule
CIFAR-10 & Image & Label skew & $Dir(0.1)$ \\
DomainNet & Image & Feature skew & Domain \\
XGLUE-NC & Text & Feature skew & Domain \\ 
QA & Text & Feature skew & Domain \\
\bottomrule
\end{tabular}
\end{table}

The image classification tasks include:
\begin{itemize}
    \item \textbf{CIFAR-10}~\citep{cifar10}: In this training task, we consider the label-skewed case where $P(y_i)$ is different among clients. Following previous works~\citep{silos}, we adopt Dirichlet allocation with concentration parameter $\alpha=0.1$ among clients.
    \item \textbf{DomainNet}~\citep{domainnet}: DomainNet contains six domains of data samples, including clipart, real, sketch, infograph, painting, and quickdraw. In this training task, we consider the varying feature case where $P(x_i|y_i)$ is different among clients. Following~\citep{fedbn}, each client is allocated random samples from only one domain.
\end{itemize}
On both tasks, we fine-tune a CLIP Vision Transformer (CLIP) model~\citep{clip}.

Besides, we fine-tune an \xlmroberta\ model on the following text dataset:
\begin{itemize}
    \item \textbf{XGLUE-NC}~\citep{xglue}: This is a news classification task consisting of 10 classes. The news texts comprise five languages (English, Spanish, French, German, and Russian). We allocate one random language to each client, which naturally introduces domain shift among clients.
\end{itemize}

In addition, we evaluate a \llama\, model on the QA datasets.
\begin{itemize}
    \item \textbf{QA}: The QA datasets consist of four commonly used question-answering datasets, i.e., SCIQ~\citep{sciq}, OpenbookQA~\citep{openbookqa}, PIQA~\citep{piqa}, ARC-Easy and ARC-Challenge~\citep{arc}. The datasets are transformed into classification tasks where the model determines the correct answer for each question and corresponding choices.
    We allocate the samples from one random dataset to each client, indicating the domain shift among clients.
\end{itemize}

\subsection{Implementation Details}\label{appendix:details}
The CLIP model is pre-trained on the DataComp dataset and is adapted from \url{https://github.com/openai/CLIP}; 
the \xlmroberta\ model is adapted from \url{https://huggingface.co/xlm-roberta-base}; the \llama\ model is adapted from \url{https://huggingface.co/meta-llama/LLaMA-7b-chat-hf}.
In all training tasks, we freeze the embedding layers of the model and fix the classifier as commonly selected layers~\citep{elsa}.
The values of adopted hyperparameters are summarized in \cref{tab:details}.
For the proposed method, we tune the value of $\lambda$ from \{1, 5, 10, 100, 500, 1000\}.

\begin{table}[ht]
    \centering
    \caption{Implementation details.}
    \label{tab:details}
    \begin{tabular}{lcccccc}
        \toprule
         Dataset & CIFAR-10 & DomainNet & XGLUE-NC & QA \\ \midrule
         Model & CLIP &CLIP & \xlmroberta & \llama \\ 
         Batch size & 64 & 64 & 32 & 16 \\
         Learning rate & 0.01   & 0.01 & 0.01 & 2e-5 \\
         Local steps$^*$  & 5    & 1 & -1 & -1 \\
         Total epochs   & 50    & 30 & 20 & 2 \\
         $\lambda$ & 1000   & 10 & 10 & 5 \\
         \bottomrule
         \multicolumn{7}{l}{\footnotesize$^*$The local steps -1 means clients iterate all training samples (i.e., one single local training epoch).} 
    \end{tabular}
\end{table}

\end{document}